\documentclass[smallextended]{svjour3}

\journalname{Data Mining and Knowledge Discovery}

\usepackage{natbib}

\usepackage[ruled, vlined, nofillcomment, linesnumbered]{algorithm2e}

\usepackage{url}
\usepackage{amsmath}
\usepackage{amssymb}
\usepackage{subcaption}
\usepackage{booktabs}
\usepackage[english]{babel}
\usepackage{color}
\usepackage{bbding}

\usepackage[pdftitle={Discovering Episodes with Compact Minimal Windows},
            pdfauthor={Nikolaj Tatti},
			citecolor = black, linkcolor = black, colorlinks = true, bookmarksopen = true, bookmarksnumbered = true, hyperfootnotes = false]{hyperref}

\usepackage{tikz}
\usetikzlibrary{snakes,arrows,shapes,positioning,fit,calc}
\usepackage{pgfplots}

\usepackage{ifthen}
\newcommand{\set}[1]{\left\{#1\right\}}
\newcommand{\pr}[1]{\left(#1\right)}
\newcommand{\fpr}[1]{\mathopen{}\left(#1\right)}
\newcommand{\spr}[1]{\left[#1\right]}

\newcommand{\abs}[1]{{\left|#1\right|}}

\newcommand{\enset}[2]{\left\{#1 ,\ldots , #2\right\}}
\newcommand{\enpr}[2]{\pr{#1 ,\ldots , #2}}

\newcommand{\real}{\mathbb{R}}
\newcommand{\NP}{\textbf{NP}}
\newcommand{\funcdef}[3]{{#1}:{#2} \to {#3}}

\newcommand{\define}{\leftarrow}

\DeclareRobustCommand{\dispfunc}[2]{%
  \ensuremath{%
  \ifthenelse{\equal{\noexpand#2}{}}%
    {{#1}}%
    {{#1}\fpr{#2}}}}

\newcommand{\closure}[1]{\mathit{cl}\fpr{#1}}
\newcommand{\score}[1]{\dispfunc{\mathit{sc}}{#1}}
\newcommand{\sinks}[1]{\dispfunc{\mathit{sinks}}{#1}}

\newcommand{\lab}[1]{\dispfunc{\mathit{lab}}{#1}}
\newcommand{\sub}[1]{\dispfunc{\mathit{sub}}{#1}}
\newcommand{\stay}[1]{\dispfunc{\mathit{stay}}{#1}}
\newcommand{\parent}[1]{\dispfunc{\mathit{par}}{#1}}

\newcommand{\inc}[1]{\dispfunc{\mathit{in}}{#1}}
\newcommand{\simple}[1]{\dispfunc{\mathit{sm}}{#1}}
\newcommand{\co}[1]{\dispfunc{\mathit{co}}{#1}}

\newcommand{\mach}[1]{M_{\efam{#1}}}
\newcommand{\pre}[1]{\dispfunc{\mathit{pre}}{#1}}
\newcommand{\greedy}[1]{\dispfunc{\mathit{g}}{#1}}
\newcommand{\pgreedy}[1]{\dispfunc{\mathit{pg}}{#1}}
\newcommand{\mwmach}[1]{\dispfunc{\mathit{minm}}{#1}}
\newcommand{\moment}[1]{\dispfunc{\mathit{m}}{#1}}

\newcommand{\efam}[1]{\mathcal{#1}}

\newcommand{\prob}[1]{p\fpr{#1}}
\newcommand{\mean}[1]{\operatorname{E}\spr{#1}}
\newcommand{\means}[1]{\operatorname{E}\bigg[{#1}\bigg]}
\newcommand{\var}[1]{\operatorname{Var}\spr{#1}}
\newcommand{\cov}[1]{\operatorname{Cov}\spr{#1}}

\SetKwComment{tcpas}{\{}{\}}
\SetCommentSty{textnormal}
\SetArgSty{textnormal}
\SetKw{False}{false}
\SetKw{True}{true}
\SetKw{Null}{null}
\SetKwInOut{Output}{output}
\SetKwInOut{Input}{input}
\SetKw{AND}{and}
\SetKw{OR}{or}
\SetKw{Continue}{continue}

\renewenvironment{proof}{\begin{oldproof}}{{\hfill \ensuremath{\Box}}\end{oldproof}}

\def\clap#1{\hbox to 0pt{\hss#1\hss}}
\def\mathclap{\mathpalette\mathclapinternal}
\def\mathclapinternal#1#2{%
\clap{$\mathsurround=0pt#1{#2}$}%
}

\tikzstyle{ep} = [inner sep = 3pt, anchor = base west]
\tikzstyle{state} = [inner sep = 1pt, anchor = base west]
\tikzstyle{state2} = [inner sep = 1pt, anchor = base west, fill = yafcolor2, text = white]
\tikzstyle{label} = [text = yafcolor5]
\tikzstyle{sedge} = [->, >= latex]
\tikzstyle{sedge2} = [->, >= latex, densely dashed]
\tikzstyle{sedge3} = [->, >= latex, densely dashed, yafcolor3!40, ultra thick]
\tikzstyle{sedge4} = [->, >= latex, yafcolor1!60, thick]

\newcommand{\nodepath}[2]{%
\foreach \x [count=\xi from 1, count = \xprev from 0]  in {#1}{%
    \node[state, right= of #2\xprev.mid east, anchor = mid west, inner sep = 2pt] (#2\xi) {\x};%
}%
}

\newcommand{\edgepath}[1]{%
\foreach \x / \y in {#1}{%
	\draw[sedge, yafcolor5] (\x.mid east) ->  (\y.mid west);%
}%
}

\makeatletter
\pgfdeclareshape{event}{
\inheritsavedanchors[from=rectangle] 
\inheritanchorborder[from=rectangle]
\inheritanchor[from=rectangle]{center}
\inheritanchor[from=rectangle]{north}
\inheritanchor[from=rectangle]{south}
\inheritanchor[from=rectangle]{south east}
\inheritanchor[from=rectangle]{south west}
\inheritanchor[from=rectangle]{west}
\inheritanchor[from=rectangle]{east}
\backgroundpath{
\southwest \pgf@xa=\pgf@x \pgf@ya=\pgf@y
\northeast \pgf@xb=\pgf@x \pgf@yb=\pgf@y
\pgf@yc=\pgf@ya \advance\pgf@yc by 2pt
\pgfpathmoveto{\pgfpoint{\pgf@xa}{\pgf@yc}}
\pgfpathlineto{\pgfpoint{\pgf@xa}{\pgf@ya}}
\pgfpathlineto{\pgfpoint{\pgf@xb}{\pgf@ya}}
\pgfpathlineto{\pgfpoint{\pgf@xb}{\pgf@yc}}
}
}
\makeatother

\newcommand{\eventseq}[2]{%
\foreach \x [count=\xi from 1, count = \xprev from 0]  in {#1}{%
    \node[draw,shape=event, right=0 of #2\xprev.south east, anchor = south west, inner sep = 2pt, outer sep = 0pt] (#2\xi) {\x};%
}%
}


\pgfdeclarelayer{background}
\pgfdeclarelayer{foreground}
\pgfsetlayers{background,main,foreground}

\definecolor{yafaxiscolor}{rgb}{0.3, 0.3, 0.3}

\definecolor{yafcolor1}{rgb}{0.4, 0.165, 0.553}
\definecolor{yafcolor2}{rgb}{0.949, 0.482, 0.216}
\definecolor{yafcolor3}{rgb}{0.47, 0.549, 0.306}
\definecolor{yafcolor4}{rgb}{0.925, 0.165, 0.224}
\definecolor{yafcolor5}{rgb}{0.141, 0.345, 0.643}
\definecolor{yafcolor6}{rgb}{0.965, 0.933, 0.267}
\definecolor{yafcolor7}{rgb}{0.627, 0.118, 0.165}
\definecolor{yafcolor8}{rgb}{0.878, 0.475, 0.686}

\newlength{\yafaxispad}
\setlength{\yafaxispad}{-2pt}
\newlength{\yaftlpad}
\setlength{\yaftlpad}{\yafaxispad}
\addtolength{\yaftlpad}{-0pt}
\newlength{\yaflabelpad}
\setlength{\yaflabelpad}{-2pt}
\newlength{\yafaxiswidth}
\setlength{\yafaxiswidth}{1.2pt}
\newlength{\yafticklen}
\setlength{\yafticklen}{2pt}

\makeatletter
\def\pgfplots@drawtickgridlines@INSTALLCLIP@onorientedsurf#1{}
\makeatother

\newcommand{\yafdrawaxis}[4]{
	\pgfplotstransformcoordinatex{#1}\let\xmincoord=\pgfmathresult 
	\pgfplotstransformcoordinatex{#2}\let\xmaxcoord=\pgfmathresult 
	\pgfplotstransformcoordinatey{#3}\let\ymincoord=\pgfmathresult 
	\pgfplotstransformcoordinatey{#4}\let\ymaxcoord=\pgfmathresult 
	\pgfsetlinewidth{\yafaxiswidth} 
	\pgfsetcolor{yafaxiscolor}
	\pgfpathmoveto{\pgfpointadd{\pgfpointadd{\pgfplotspointrelaxisxy{0}{0}}{\pgfqpointxy{\xmincoord}{0}}}{\pgfqpoint{-0.5\yafaxiswidth}{\yafaxispad}}}
	\pgfpathlineto{\pgfpointadd{\pgfpointadd{\pgfplotspointrelaxisxy{0}{0}}{\pgfqpointxy{\xmaxcoord}{0}}}{\pgfqpoint{0.5\yafaxiswidth}{\yafaxispad}}}
	\pgfpathmoveto{\pgfpointadd{\pgfpointadd{\pgfplotspointrelaxisxy{0}{0}}{\pgfqpointxy{0}{\ymincoord}}}{\pgfqpoint{\yafaxispad}{-0.5\yafaxiswidth}}}
	\pgfpathlineto{\pgfpointadd{\pgfpointadd{\pgfplotspointrelaxisxy{0}{0}}{\pgfqpointxy{0}{\ymaxcoord}}}{\pgfqpoint{\yafaxispad}{0.5\yafaxiswidth}}}
	\pgfusepath{stroke}
}

\pgfplotscreateplotcyclelist{yaf}{%
{yafcolor1,mark options={scale=0.75},mark=o}, 
{yafcolor2,mark options={scale=0.75},mark=square},
{yafcolor3,mark options={scale=0.75},mark=triangle},
{yafcolor4,mark options={scale=0.75},mark=o},
{yafcolor5,mark options={scale=0.75},mark=o},
{yafcolor6,mark options={scale=0.75},mark=o},
{yafcolor7,mark options={scale=0.75},mark=o},
{yafcolor8,mark options={scale=0.75},mark=o}} 

\pgfplotsset{axis y line=left, axis x line=bottom,
	tick align=outside,
	tickwidth=\yafticklen,
	clip = false,
    x axis line style= {-, line width = 0pt, color=black!0},
    y axis line style= {-, line width = 0pt, color=black!0},
    x tick style= {line width = \yafaxiswidth, color=yafaxiscolor, yshift = \yafaxispad},
    y tick style= {line width = \yafaxiswidth, color=yafaxiscolor, xshift = \yafaxispad},
    x tick label style = {font=\scriptsize, yshift = \yaftlpad},
    y tick label style = {font=\scriptsize, xshift = \yaftlpad},
    every axis y label/.style = {at = {(ticklabel cs:0.5)}, rotate=90, anchor=center, font=\scriptsize, yshift = -\yaflabelpad},
    every axis x label/.style = {at = {(ticklabel cs:0.5)}, anchor=center, font=\scriptsize, yshift = \yaflabelpad},
    x tick label style = {font=\scriptsize, yshift = 1pt},
    grid = major,
    major grid style  = {dash pattern = on 1pt off 3 pt},
	every axis plot post/.append style= {line width=\yafaxiswidth} ,
	legend cell align = left,
	legend style = {inner sep = 1pt, inner ysep = 0pt, cells = {font=\scriptsize}},
	legend image code/.code={%
		\draw[mark repeat=2,mark phase=2,#1] 
		plot coordinates { (0cm,0cm) (0.15cm,0cm) (0.2cm,0cm) };%
	} 
}

\begin{document}
\title{Discovering Episodes with Compact Minimal Windows}

\author{Nikolaj Tatti}
\institute{
Nikolaj Tatti \at
ADReM, University of Antwerp, Belgium\\
DTAI, KU Leuven, Belgium\\
HIIT, Aalto University, Finland\\
\email{nikolaj.tatti@aalto.fi}
}

\maketitle

\begin{abstract}
Discovering the most interesting patterns is the key problem in the field of
pattern mining.  While ranking or selecting patterns is well-studied for
itemsets it is surprisingly under-researched for other, more complex, pattern
types.

In this paper we propose a new quality measure for episodes. An episode is
essentially a set of events with possible restrictions on the order of events.
We say that an episode is significant if its occurrence is abnormally compact,
that is, only few gap events occur between the actual episode events, when
compared to the expected length according to the independence model. We can
apply this measure as a post-pruning step by first discovering frequent
episodes and then rank them according to this measure.

In order to compute the score we will need to compute the mean and the
variance according to the independence model.  As a main technical contribution
we introduce a technique that allows us to compute these values.  Such a task
is surprisingly complex and in order to solve it we develop intricate finite
state machines that allow us to compute the needed statistics. We also show
that asymptotically our score can be interpreted as a $P$-value.  In our
experiments we demonstrate that despite its intricacy our ranking is fast: we can rank tens of
thousands episodes in seconds. Our experiments with text data demonstrate
that our measure ranks interpretable episodes high.
\end{abstract}

\keywords{episode mining; statistical test; independence model; minimal window}

\section{Introduction}
Discovering the most interesting patterns is the key problem in the field of pattern mining.
While ranking or selecting patterns is well-studied for itemsets, a
canonical and arguably the easiest pattern type, it is surprisingly
under-researched for other, more complex, pattern types. 

Discovering episodes, frequent patterns from an event sequence has been a
fruitful and active field in pattern mining since their original introduction
by~\citet{mannila:97:discovery}. Essentially, an episode is a set of events that
should occur close to each other (gaps are allowed) possibly with some
constraints on the order of the occurrences, see Section~\ref{sec:prel} for
full definition.  While the concept of support for itemsets is straightforward,
it is simply the number of transactions containing the pattern, defining a
support for episodes is more complex. The most common way of defining a support
is to slide a window of fixed size over the sequence and count in how many
windows the pattern occurs.  Such a measure is monotonically decreasing and
hence all frequent episodes can be found using \textsc{APriori} approach given
by~\citet{mannila:97:discovery}.  Alternatively we can consider counting minimal
windows, that is finding and counting the most compact windows that contain the
episode.

The common wisdom is that finding frequent patterns is not enough.  Discovering
frequent patterns with high threshold will result to trivial patterns, omitting
many interesting patterns, while using a low threshold will result in a pattern
explosion.  This phenomenon has led to many ranking methods for itemsets, the
most well-studied pattern type. Unlike for itemsets, ranking episodes is
heavily under-developed. Existing statistical approaches for ranking episodes
are mostly based on the number of fixed-size windows (see more detailed discussion in
Section~\ref{sec:related}). However, a natural way of measuring the goodness
of an episode is the average length of its instances---a good episode should have
compact minimal windows. Hence, our goal and contribution is a measure
based directly on the average length of minimal windows.

The most straightforward and common way to measure significance for itemsets
is to compare the observed support, the number of transactions in which all
attributes co-occur, against the independence model: if the observed support
deviates a lot from the expectation, we consider the itemset important.
In this paper we use the same principle and
propose an interestingness measure for an episode by comparing
the observed lengths of minimal windows of the episode against the expectation
computed from the independence model. Given a set of episodes we can now apply
our measure to each episode and rank the episodes, placing episodes with the most
abnormal minimal windows on top.
While this is an easy task for itemsets, computing statistics turns out to be
complex for episodes.

We define our score as follows: given an episode $G$, we assign a weight to
each minimal window of $G$ based on how long it is. The weight
will be large for small windows and small for large windows.  To compute the
expected weight we assume that for each symbol we have a probability of its
occurrence in the sequence. We then compute the expected weight based on a
model in which the symbols are independent of each other.  We say that the
episode is significant if the observed average weight is abnormally large, that
is, the minimal windows are abnormally short.

\begin{example}
\label{ex:toy1}
Assume that we have an alphabet of size $3$, $\Sigma = \set{a, b, c}$. Assume
that the probabilities for having a symbol are $\prob{a} = 1/2$, $\prob{b} =
1/4$, and $\prob{c} = 1/4$. Let $G$ be a serial episode $a \to b$.  Then $s$ is
a minimal window for $G$ if and only if it has a form $ac\cdots cb$.  Hence the
probability of a random sequence $s$ of length $k$ to be a minimal window for
$G$ is equal to 
\[
\begin{split}
p(s \text{ is a minimal window  of } G, \abs{s} = k) = \frac{1}{2} \times \frac{1}{4} \times \frac{1}{4^{k - 2}}.
\end{split}
\]
We are interested in a probability of a minimal window having length $k$.
To get this we divide the joint probability by the probability 
\[
	p(s \text{ is a minimal window  of } G) = \sum_{k = 2}^\infty  \frac{1}{2} \times \frac{1}{4} \times \frac{1}{4^{k - 2}}  =  1/6.
\]
Using this normalisation we get that the probability of a minimal window having 
length $k$ is equal to
\[
p(\abs{s}  = k \mid s \text{ is a minimal window of } G) = 3/4 \times 1/4^{k - 2},
\]
for $k \geq 2$, and $0$ otherwise. If we now weight minimal windows with an
exponential decay, say, $1/2^{\abs{s}}$, then the expected weight is equal to
$3 / 14 \approx 0.2$.  On the other hand, assume that we have a sequence $s =
abcacbcababcab$.  There are $4$ minimal windows of length $2$ and one minimal
window of length $3$.  Hence, the observed average weight is $(4 \times 1/2^2 +
1/2^3) /5 = 0.225$ suggesting that the minimal windows are more compact than
what the independence model implies.
\end{example}

Computing the needed statistics turns out to be a surprisingly complex problem.
We attack this problem in Section~\ref{sec:machine} by introducing a certain
finite state machine having episodes as the nodes. Then using this structure we
are able to compute the statistics recursively, starting from simple
episodes and moving towards more complex ones.

Our recipe for the mining process is as follows: Given the sequence we first
split the sequence in two. The first sequence is used for discovering candidate
episodes, in our case episodes that have a large number of minimal windows.  Luckily, this condition is
monotonically decreasing and we can mine these episodes using a standard
\textsc{APriori} method.  We also compute the needed probabilities of individual
events from the first sequence. Once we have discovered candidate episodes and
have computed the expectation, we compare the expected weight against the average
observed weight from the \emph{second} sequence using a simple $Z$-score. This
step allows us to prune uninteresting episodes, which is in our case episodes that obey
the independence model.


The rest of the paper is structured as follows. In
Section~\ref{sec:prel} we introduce the preliminary
definitions and notation.  We introduce our
method for evaluating the difference between the observed windows and the
independence model in Section~\ref{sec:window}.
In Sections~\ref{sec:machine}--\ref{sec:moments} we lay out our
approach for computing the independence model.
We present the related work in Section~\ref{sec:related}.
Our experiments are given in Section~\ref{sec:experiments}
and we conclude our work with discussion in
Section~\ref{sec:conclusions}.
All proofs are given in Appendix.

\section{Preliminaries and Notation}
\label{sec:prel}

We begin by presenting preliminary concepts and notations that will be used
throughout the rest of the paper.

A \emph{sequence} $s = \enpr{s_1}{s_L}$ is a string of symbols coming from a
finite \emph{alphabet} $\Sigma$, that is, we have $s_i \in \Sigma$.
Given a sequence $s$ and two indices $i$ and
$j$, such that $i \leq j$, we denote by $s[i, j] = \enpr{s_i}{s_j}$ 
a sub-sequence of $s$.

An episode $G$ is represented by an acyclic directed graph with labelled nodes,
that is $G = (V, E, \lab{})$, where $V = \enpr{v_1}{v_K}$ is the set of nodes, $E$
is the set of directed edges, and $\lab{}$ is the function $\funcdef{\lab{}}{V}{\Sigma}$,
mapping each node $v_i$ to its label.

Given a sequence $s$ and an episode $G$ we say that $s$ \emph{covers}  the
episode if there is an \emph{injective} map $f$ mapping each node $v_i$ to a valid
index such that the node and the corresponding sequence element have the same
label, $s_{f(v_i)} = \lab{v_i}$, and that if there is an edge $(v_i, v_j) \in E$, then
we must have $f(v_i) < f(v_j)$. In other words, the parents of the node $v_i$
must occur in $s$ before $v_i$. 
Traditional episode mining is based on searching episodes that are covered by
sufficiently many sub-windows of certain fixed size.

\begin{example}
Consider an episode given in Figure~\ref{fig:toycover}.
This episode has 4 nodes labelled as $a$, $b$, $c$, and $d$, and requires 
that $a$ must come first, followed by $b$ and $c$ in arbitrary order, and finally followed by 
$d$. Figure~\ref{fig:toycover} also shows an example of a sequence that covers the episode.

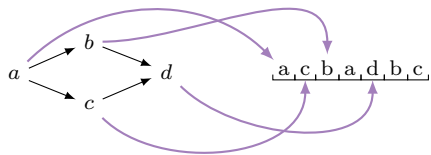
\begin{figure}[htb!]
\begin{center}
\begin{tikzpicture}
\node[ep] (e1) at (0, 0) {$a$};
\node[ep] (e2) at (1, 0.4) {$b$};
\node[ep] (e3) at (1, -0.4) {$c$};
\node[ep] (e4) at (2, 0) {$d$};
\draw[sedge] (e1) -> (e2);
\draw[sedge] (e1) -> (e3);
\draw[sedge] (e2) -> (e4);
\draw[sedge] (e3) -> (e4);

\node[right=1cm of e4] (n0) {};
\eventseq{a, c, b, a, d, b, c}{n}
\draw[sedge4] (e1) edge [out=45,in=135] (n1);
\draw[sedge4] (e2) edge [out=0,in=90] (n3);
\draw[sedge4] (e3) edge [out=315,in=270] (n2);
\draw[sedge4] (e4) edge [out=315,in=270] (n5);
\end{tikzpicture}
\end{center}

\caption{A toy episode with 4 nodes and an example of a sequence covering the episode}
\label{fig:toycover}
\end{figure}

\end{example}

An elementary theorem says that in a directed acyclic graph there exists a sink,
a node with no outgoing edges. We denote the set of sinks by $\sinks{G}$.
Given an episode $G$ and a node $v$, we define $G - v$ to be the sub-episode
obtained from $G$ by removing $v$, and the incident edges. 

Given an episode $G$ we define a set of \emph{prefix} episodes by
\[
	\pre{G} = \set{G} \cup \bigcup_{v \in \sinks{G}} \pre{G - v},
\]
that is, a prefix episode $H$ is a subepisode of $G$ such that if $v_i$ is contained in $H$,
then all parents (in $G$) of $v_i$ are also contained in $H$.

\begin{example}
Episode given in Figure~\ref{fig:toycover} has 6 prefix episodes. Among of these 6 episodes
one is empty, the remaining 5 episodes are given in Figure~\ref{fig:toyprefix}.

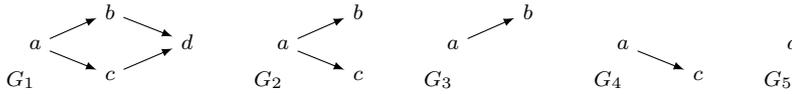
\begin{figure}[htb!]
\begin{center}

\begin{tikzpicture}
\node at (0, -0.4) {$G_1$};
\node[ep] (e1) at (0, 0) {$a$};
\node[ep] (e2) at (1, 0.4) {$b$};
\node[ep] (e3) at (1, -0.4) {$c$};
\node[ep] (e4) at (2, 0) {$d$};
\draw[sedge] (e1) -> (e2);
\draw[sedge] (e1) -> (e3);
\draw[sedge] (e2) -> (e4);
\draw[sedge] (e3) -> (e4);
\end{tikzpicture}\hspace{5mm}
\begin{tikzpicture}
\node at (0, -0.4) {$G_2$};
\node[ep] (e1) at (0, 0) {$a$};
\node[ep] (e2) at (1, 0.4) {$b$};
\node[ep] (e3) at (1, -0.4) {$c$};
\draw[sedge] (e1) -> (e2);
\draw[sedge] (e1) -> (e3);
\end{tikzpicture}\hspace{5mm}
\begin{tikzpicture}
\node at (0, -0.4) {$G_3$};
\node[ep] (e1) at (0, 0) {$a$};
\node[ep] (e2) at (1, 0.4) {$b$};
\draw[sedge] (e1) -> (e2);
\end{tikzpicture}\hspace{5mm}
\begin{tikzpicture}
\node at (0, -0.4) {$G_4$};
\node[ep] (e1) at (0, 0) {$a$};
\node[ep] (e3) at (1, -0.4) {$c$};
\draw[sedge] (e1) -> (e3);
\end{tikzpicture}\hspace{5mm}
\begin{tikzpicture}
\node at (0, -0.4) {$G_5$};
\node[ep] (e1) at (0, 0) {$a$};
\end{tikzpicture}

\end{center}
\caption{Non-empty prefix episodes of an episode given in Figure~\ref{fig:toycover}}
\label{fig:toyprefix}
\end{figure}
\end{example}

\section{Minimal Windows of Episodes}
\label{sec:window}
Traditionally, discovering episodes from a single long sequence can be done in
two ways. The first approach is to slide a window of fixed sized over the
window and count the number of windows in which the episode occurs. The second
approach is to count the number of minimal windows. The goal of this paper is
to build a measure based minimal windows. If the
statistic is abnormal, then we consider this pattern important.

In order to make the preceding discussion more formal, let $G$ be an episode,
and let $s$ be a sequence. We say that $s$ is a \emph{minimal window} for $G$
if $G$ is covered by $s$ but not by any proper sub-window of $s$.
In this paper we are interested in discovering episodes that have abnormally
compact minimal windows, a natural way of defining 
the significance of an episode.

\begin{example}
Consider a toy episode given in Figure~\ref{fig:toycover}. The sequence given
in Figure~\ref{fig:toycover} covers the episode but it not a minimal window.
However, if we remove 2 last symbols from the sequence, then the sequence becomes
a minimal window.
\end{example}

\begin{example}
Consider a serial episode $a \to b$, that is a pattern stating event $a$ should
be followed by an event $b$, and two sequences '$abababababababab$' and
'$abacbadbaxbaybab$'. If we fix the length of a window to be $6$ (or larger),
then the number of windows covering the episode will be the same for the both
sequences. In fact, in this case all windows will contain the episode.
However, occurrences of the episode in these sequences are different.  In the
first sequence, all minimal windows are of length $2$, while in the second
sequence, we have 2 minimal windows of length $2$ and $4$ minimal windows of
length $3$. Our intuition is that $a \to b$ should be considered more
significant in the first sequence than in the second.
\end{example}

Our goal in this paper is to design a measure that will indicate if
the minimal windows are significantly compact. One approach would be to measure
the average length of minimal windows. However, this ratio is susceptible to
the variance in large minimal windows: consider that we have two minimal
windows: the first is of length $10$ and the other is of length $1000$.  Then
the length of the second window dominates the average length, even though the
first window is more interesting.  In order to counter this phenomenon we
suggest using the following statistic. Assume that we are given a parameter $0
< \rho < 1$.  Let $s$ be a minimal window for $G$. We define the \emph{weight} of
a window to be $\rho^{\abs{s}}$. Compact windows will have a large value
whereas large windows will have a small value. Let $r$ be the average
weight of all minimal windows for $G$.

We are interested in testing whether $r$ is significantly large. In order to do
that, let $s$ be a random sequence and define a random variable $Y_i = a$  if
$s[i, a]$ is a minimal window, if there is no such $a$ we define $Y_i = 0$.
Define also $X_i = Y_i > 0$ to be the indicator whether $s$ has a minimal
window of $G$ starting at $i$th index.

We suggest using the following statistic.
Given a parameter $0 < \rho < 1$, we define $Z_i = X_i\rho^{Y_i - i + 1}$.
Then $r$ is an estimate of a statistic $\sum_{i = 1}^\infty Z_i / \sum_{i = 1}^\infty X_i$. 

We will show that there is $\mu$ and $\sigma$ such that
\[
	\sqrt{L}\big(\sum_{i = 1}^L Z_i / \sum_{i = 1}^L X_i - \mu\big)
\]
approaches a normal distribution $N(0, \sigma^2)$. This suggest to define a
measure $\score{G} = \sqrt{L}(r - \mu) / \sigma$. This is simply a
$Z$-normalisation of the statistic $r$.

We can also compute $\Phi\pr{-\score{G}}$, where $\Phi$ is the cumulative density
function of the standard normal distribution $N(0, 1)$, and interpret this
quantity as a $P$-value. However, this interpretation is problematic mainly
because the normal distribution estimate is only accurate asymptotically.

Hence, we only consider $\score{G}$ merely as a ranking measure. Nevertheless,
this measure makes a lot of sense: it measures how much the observed value
deviates from the expectation, a common approach in ranking patterns, and
it also takes the account the uncertainty of the measure.

In order to achieve our goal, we need to perform two steps
\begin{enumerate}
\item We need to show that $\score{G}$ converges into $N(0, 1)$
\item We need to compute $\mu$ and $\sigma^2$ that are needed for $\score{G}$.
\end{enumerate}

Both of these steps are non-trivial. Proving asymptotic normality is difficult
because $X_i$, $Z_i$, and $Y_i$ are not independent, hence we will have to show that the sequence
is mixing fast enough. Computing $\mu$ and $\sigma^2$ will require a set of
recursive equations. The remaining theoretical sections are devoted to proving
asymptotic normality and computing the mean and the variance.

\section{Detecting Minimal Windows}
\label{sec:machine}
In this and the next section we establish our main theoretical contribution, which is
how to compute $\score{G}$.

We divide our task as follows: In Section~\ref{sec:construct} we build a finite
state machine recognising when an episode is covered. In
Section~\ref{sec:simple} we modify this machine so that we can use it for
subsequent statistical calculations. Using this machine as a base we construct
in Section~\ref{sec:minmach} a machine that is able to recognise a minimal
window of $G$.

\subsection{Constructing finite state machine}
\label{sec:construct}

We begin by constructing a finite state machine that recognises the coverage
of an episode.

In this paper, a \emph{finite state machine} (or simply a machine) $M$ is a DAG
with labelled edges and a single source. We allow multiple edges between two
nodes.

Given a state $x$ in $M$ we say that $s$ \emph{covers} $x$ if there is a
subsequence $t = \enpr{s_{i_1}}{s_{i_N}}$ such that $x$ can be reached from the
source node using $t$ as an input. 

Given an episode $G$, we define a machine $M_G$ to be a DAG containing prefix
graphs as nodes $V(M_G) = \set{x_H \mid H \in \pre{G}}$. We add an edge $e =
(x_H, x_F)$ if and only if there is a sink node $v \in V(G)$ such that $H = F -
v$.  We label edge $e$ with the label of $v$, $\lab{e} = \lab{v}$.

\begin{example}
Consider an episode $G$ given in Figure~\ref{fig:toy1:a}. The corresponding
machine $M_G$ is given in Figure~\ref{fig:toy1:b}. Sink state $x_6$
corresponds to episode $G$ and source state $x_1$ corresponds to the
empty episode. Intermediate state $x_5$ corresponds to $G_2$ given in Figure~\ref{fig:toyprefix},
$x_3$ corresponds to $G_3$, $x_4$ corresponds to $G_4$, and $x_2$ corresponds to $G_5$.

\begin{figure}
\begin{center}
\subcaptionbox{Episode $G$\label{fig:toy1:a}}{
\begin{tikzpicture}
\node[ep] (n1) at (0, 0) {$a$};
\node[ep] (n2) at (1, 0.4) {$b$};
\node[ep] (n3) at (1, -0.4) {$c$};
\node[ep] (n4) at (2, 0) {$d$};
\draw[sedge] (n1) -> (n2);
\draw[sedge] (n1) -> (n3);
\draw[sedge] (n2) -> (n4);
\draw[sedge] (n3) -> (n4);
\end{tikzpicture}
}\hspace{1cm}
\subcaptionbox{Machine $M_G$\label{fig:toy1:b}}{
\begin{tikzpicture}
\node[state] (n1) at (0, 0) {$x_1$};
\node[state] (n2) at (1, 0) {$x_2$};
\node[state] (n3) at (2, 0.4) {$x_3$};
\node[state] (n4) at (2, -0.4) {$x_4$};
\node[state] (n5) at (3, 0) {$x_5$};
\node[state] (n6) at (4, 0) {$x_6$};
\draw[sedge] (n1) -> node[label, above] {$a$} (n2);
\draw[sedge] (n2) -> node[label, above] {$b$} (n3);
\draw[sedge] (n2) -> node[label, above] {$c$} (n4);
\draw[sedge] (n3) -> node[label, above] {$c$} (n5);
\draw[sedge] (n4) -> node[label, above] {$b$} (n5);
\draw[sedge] (n5) -> node[label, above] {$d$} (n6);
\end{tikzpicture}}
\end{center}
\caption{Toy example of an episode $G$ and the corresponding machine $M_G$.}
\end{figure}
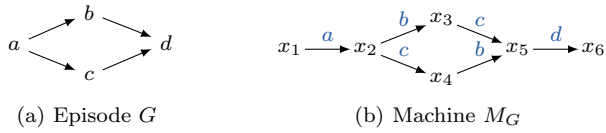

\end{example}

Comparing the definition of coverage of a state in $x$ and the definition
of a coverage for episodes gives immediately the following proposition.

\begin{proposition}
\label{prop:cover}
Given an episode $G$, a sequence $s$ covers an episode $H \in \pre{G}$ if and
only if $s$ covers the corresponding state $x_H$ in $M_G$. 
\end{proposition}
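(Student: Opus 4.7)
The plan is to prove both directions by induction on $\abs{V(H)}$. The base case $\abs{V(H)} = 0$ is immediate: $x_H$ is then the source of $M_G$ and is reached by the empty subsequence, while an empty map trivially covers the empty episode. Throughout the induction I will use a small auxiliary fact: if $H \in \pre{G}$ and $v$ is a sink of $H$, then $H - v \in \pre{G}$. This holds because $H$ is closed under taking parents in $G$ and $v$ has no outgoing edge in $H$, so no remaining vertex can have $v$ as its parent in $G$.

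For the forward direction, assume $s$ covers $H$ via an injective map $f$, and let $v$ be the node of $H$ maximising $f$. Then $v$ must be a sink of $H$, because an outgoing edge $(v, w)$ in $H$ would force $f(v) < f(w)$. Set $H' = H - v$, which lies in $\pre{G}$ by the auxiliary fact. The restriction $f|_{V(H')}$ witnesses that $s[1, f(v) - 1]$ covers $H'$, so by the induction hypothesis $s[1, f(v) - 1]$ covers $x_{H'}$. By construction, $M_G$ contains the edge $(x_{H'}, x_H)$ with label $\lab{v}$, so appending this edge and the symbol $s_{f(v)} = \lab{v}$ to the witnessing subsequence produces one that reaches $x_H$.

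For the backward direction, assume $s$ covers $x_H$ via a subsequence $s_{i_1}, \ldots, s_{i_N}$ tracing a path from the source in $M_G$. By the definition of the edges of $M_G$, the last edge of this path has the form $(x_{H'}, x_H)$ for some sink $v$ of $H$ with $H' = H - v$ and label $\lab{v}$; hence $s_{i_N} = \lab{v}$, and the truncated subsequence $s_{i_1}, \ldots, s_{i_{N-1}}$ lies inside $s[1, i_N - 1]$ and traces a path to $x_{H'}$. By the induction hypothesis $s[1, i_N - 1]$ covers $H'$ via some map $f'$; extending $f'$ by $f(v) = i_N$ yields an injective label-preserving map on $V(H)$, and the edge-order constraint is satisfied because $v$, being a sink of $H$, has no outgoing edges in $H$.

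The only genuinely non-routine step is the auxiliary fact in the first paragraph: one must observe that removing a sink of $H$ (rather than a sink of $G$) keeps one inside $\pre{G}$, and this is what glues the induction together. Everything else is a matter of unrolling definitions, because the construction of $M_G$ is designed to mirror exactly the sink-removal operation on prefix episodes.
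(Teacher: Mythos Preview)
Your proof is correct. The paper does not actually give a proof of this proposition; it simply remarks that ``comparing the definition of coverage of a state \ldots\ and the definition of a coverage for episodes gives immediately the following proposition.'' Your induction on $\abs{V(H)}$ is precisely the routine unrolling of definitions that the paper leaves implicit, and your auxiliary fact (that removing a sink of $H$ rather than a sink of $G$ keeps one inside $\pre{G}$) is exactly the small observation needed to make the recursion in the definition of $M_G$ match up with the characterisation of $\pre{G}$ as the parent-closed subgraphs of $G$.
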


\subsection{Making Simple Machines}
\label{sec:simple}

In order to be able to compute the needed probabilities in subsequent sections,
a machine need to have a crucial property. We say that machine $M$ is
\emph{simple} if each state in $M$ does not multiple incoming edges with the
same label. If we reverse the direction of edges, then simplicity is equivalent
to a finite state machine being deterministic.

In general, $M_G$ is not simple. If an episode $G$ contains two nodes, say
$v_i$ and $v_j$ with the same label such that $v_i$ is not an ancestor of $v_j$
and vice versa, then there is a state $x_H$ in $M_G$, where $H$ is a prefix episode
having $v_i$ and $v_j$ as sinks will have (at least) two incoming edges with the
same label (see Figures~\ref{fig:toy2:a}--\ref{fig:toy2:b}).

Luckily, we can transform $M_G$ into a simple machine. This transformation is
almost equivalent to a process of making a non-deterministic finite state machine
to deterministic.

In order to make this formal, let us first give some definitions. Assume that we
are given a machine $M$.
Given a state $x$ in $M$, we define
\[
	\inc{x} = \set{\lab{e} \mid e = (y, x) \in E(M)}
\]
to be the set of labels of all incoming edges.  If $X$ is a subset of states in $M$,
then we write $\inc{X} = \bigcup_{x \in X} \inc{x}$.

Let $X$
be a subset of states in $M$ and let $a$ be a label.  We define
\[
	\sub{X; a} = \set{y  \mid e = (y, x) \in E(M), \lab{e} = a, x \in X }
\]
to be the union set of parents of each $v \in X$ connected with an edge having
the label $a$. We also define
\[
	\stay{X; a} = \set{x \in X  \mid a \notin \inc{x}}
\]
to be the set of states that have no incoming edge with a label $a$.

Let $i$ be the (unique) source state in $M$. We define
\[
	\parent{X; a} =
	\begin{cases}
		\sub{X; a} \cup \stay{X; a} & \text{if } i \notin \sub{X; a} \\
		\set{i}    & \text{if } i \in \sub{X; a}. \\
	\end{cases}
\]
Finally, we define a closure of $X$
inductively to be the collection of sets of states
\[
	\closure{X} = \set{X} \cup \bigcup_{a \in \inc{X}} \closure{\parent{X; a}}.
\]

We are now ready to define a simple machine $\simple{M}$. The states of this machine are
\[
	V(\simple{M}) = \bigcup_{x \in \sinks{M}} \closure{\set{x}}.
\]
An edge $e = (X, Y)$ with a label $a$ is in $E(\simple{M})$ if and only if $a \in \inc{Y}$ and $X = \parent{Y; a}$. 
Since, for each $a$, there is only one $X$ such that $X = \parent{Y; a}$, it follows that $\simple{M}$ is simple.

\begin{example}
A machine $M_G$ given in Figure~\ref{fig:toy2:b} is not simple since the state
$x_4$ has two incoming edges with $a$, each edge correspond to either one of
$a$. In order to obtain $\simple{M_G}$, we first observe that the nodes are
\[
\begin{split}
	\set{\set{x_6}} \cup \closure{\set{x_4}} \cup \closure{\set{x_5}} & = \set{\set{x_6}, \set{x_4}, \set{x_5}} \cup \closure{\set{x_2, x_3}} \cup \closure{\set{x_5}} \\
		& = \set{\set{x_6}, \set{x_4}, \set{x_5}, \set{x_2, x_3}, \set{x_3}, \set{x_1}}.
\end{split}
\]
This final machine is given in Figure~\ref{fig:toy2:c}.
Note that $\simple{M_G}$ is simple since parents of $x_4$ are grouped together.

\begin{figure}
\begin{center}
\subcaptionbox{$G$\label{fig:toy2:a}}{
\begin{tikzpicture}
\node[ep] (n1) at (0, 0) {$a$};
\node[ep] (n2) at (0, 0.4){$a$};
\node[ep] (n3) at (1, 0){$b$};
\draw[sedge] (n1.mid east) -> (n3.mid west);
\end{tikzpicture}}\hspace{0.5cm}
\subcaptionbox{Machine $M_G$\label{fig:toy2:b}}{
\begin{tikzpicture}
\node[state] (n1) at (0, 0) {$x_1$};
\node[state] (n2) at (1, 0.4){$x_2$};
\node[state] (n3) at (1, -0.4) {$x_3$};
\node[state] (n4) at (2, 0) {$x_4$};
\node[state] (n5) at (2, -0.8) {$x_5$};
\node[state] (n6) at (3, -0.4) {$x_6$};
\draw[sedge] (n1) -> node[label, above] {$a$} (n2);
\draw[sedge] (n1) -> node[label, above] {$a$} (n3);
\draw[sedge] (n2) -> node[label, above] {$a$} (n4);
\draw[sedge] (n3) -> node[label, above] {$a$} (n4);
\draw[sedge] (n3) -> node[label, above] {$b$} (n5);
\draw[sedge] (n4) -> node[label, above] {$b$} (n6);
\draw[sedge] (n5) -> node[label, above] {$a$} (n6);
\end{tikzpicture}}\hspace{0.5cm}
\subcaptionbox{Machine $\simple{M_G}$\label{fig:toy2:c}}{
\begin{tikzpicture}
\node[state] (n1) at (0, 0) {$x_1$};
\node[state] (n2) at (1, 0.4){$x_2x_3$};
\node[state] (n3) at (1, -0.4) {$x_3$};
\node[state] (n4) at (2.5, 0.4) {$x_4$};
\node[state] (n5) at (2.5, -0.4) {$x_5$};
\node[state] (n6) at (3.5, 0) {$x_6$};
\draw[sedge] (n1) -> node[label, above] {$a$} (n2);
\draw[sedge] (n1) -> node[label, above] {$a$} (n3);
\draw[sedge] (n2) -> node[label, above] {$a$} (n4);
\draw[sedge] (n3) -> node[label, above] {$b$} (n5);
\draw[sedge] (n4) -> node[label, above] {$b$} (n6);
\draw[sedge] (n5) -> node[label, above] {$a$} (n6);
\end{tikzpicture}}
\end{center}
\caption{Toy example of an episode $G$ and the corresponding machines $M_G$ and $\simple{M_G}$.}
\end{figure}
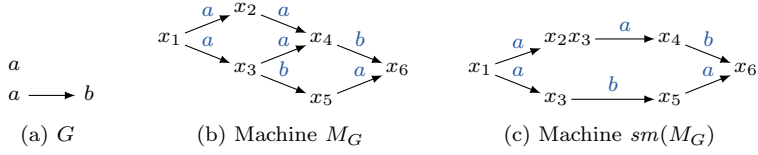

\end{example}

The following proposition reveals the expected result between $M$ and
$\simple{M}$.

\begin{proposition}
\label{prop:simplecover}
Let $M$ be a machine.  Let $X = \enset{x_1}{x_N}$ be a state in
$\simple{M}$. Then a sequence $s$ covers $V$ if and only if $s$ covers at least
one $x_i$.
\end{proposition}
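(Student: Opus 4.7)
I would prove both directions by induction on path length, relying on the definition of $\parent{Y;a}$ to translate $\simple{M}$-paths back and forth with $M$-paths.

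For the direction ``$s$ covers some $x_i \in X$ in $M$ implies $s$ covers $X$ in $\simple{M}$,'' I induct on the length $k$ of the shortest subsequence of $s$ witnessing an $M$-path $i = y_0 \to \cdots \to y_k = x_i$. The base $k=0$ forces $x_i = i$; by the construction the only state of $\simple{M}$ containing $i$ is $\{i\}$ itself (because whenever $i \in \sub{Y;a}$ the set $\parent{Y;a}$ collapses to $\{i\}$, so $i$ never leaks into larger subsets), hence $X = \{i\}$ is the source and trivially covered. For the step, let $a$ be the label of the edge $y_{k-1} \to y_k$; since $y_k \in X$ we have $a \in \inc{X}$ and therefore $\simple{M}$ contains the edge $(\parent{X;a}, X)$. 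If $i \notin \sub{X;a}$ then $y_{k-1} \in \sub{X;a} \subseteq \parent{X;a}$, the inductive hypothesis gives an $\simple{M}$-path from $\{i\}$ to $\parent{X;a}$ witnessed by a prefix of $s$, and appending the $a$-edge finishes. If $i \in \sub{X;a}$ then $\parent{X;a} = \{i\}$ and the single edge $(\{i\}, X)$ labelled $a$ already suffices.

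For the converse, I induct on the length $m$ of an $\simple{M}$-path $\{i\} = X_0 \to \cdots \to X_m = X$ witnessed by indices $j_1 < \cdots < j_m$ of $s$. The base $m=0$ is trivial since $i \in X$ is always covered. For the step, the prefix $s[1,j_m - 1]$ covers $X_{m-1}$ in $\simple{M}$, so by induction some $z \in X_{m-1}$ is covered by it in $M$. If $X_{m-1} = \sub{X;a_m} \cup \stay{X;a_m}$, I case-split: either $z \in \sub{X;a_m}$, and then some $x \in X$ has an $a_m$-edge from $z$ which, together with position $j_m$, yields coverage of $x$; or $z \in \stay{X;a_m} \subseteq X$, in which case $z$ itself is a member of $X$ already covered by $s$. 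If instead $X_{m-1} = \{i\}$, then $z = i$ and by definition of $\sub{X;a_m}$ there is some $x \in X$ with an $a_m$-edge from $i$, so $s$ covers $x$ via $s_{j_m}$.

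The main subtlety is recognising the dual role played by the two pieces of $\parent{Y;a}$: $\sub{Y;a}$ accounts for the case ``a node of $Y$ is reached precisely via the last $a$,'' while $\stay{Y;a}$ accounts for ``a node of $Y$ was already covered before the last $a$ was read.'' Nodes in $Y$ that do possess an $a$-incoming edge need not be kept in $\parent{Y;a}$ because they are represented implicitly through their $a$-parents in $\sub{Y;a}$. The collapse to $\{i\}$ when $i \in \sub{Y;a}$ is a shortcut exploiting that the source is covered by every sequence; the cleanest part of the proof is checking that this shortcut is compatible with both implications, which is exactly what Case 2 in each induction verifies.
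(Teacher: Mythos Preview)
Your proof is correct and follows the same strategy as the paper: peel off the last edge of a witnessing path and case-split on whether the relevant predecessor lies in $\sub{X;a}$ or in $\stay{X;a}$, treating the collapse to $\{i\}$ as a separate case. The paper organises this as a single structural induction on the DAG order of $\simple{M}$ (assuming the proposition for all parent states of $X$ and proving both directions simultaneously) rather than your two separate path-length inductions, and is less explicit than you are about the collapse case, but the content is the same.
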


The coverage of a machine is based on subsequences and working with
subsequences is particularly difficult since there may be several subsequences
that cover episode $G$, which leads to difficulties when computing
probabilities. 

Instead of working with subsequences directly, we will define a greedy function.
Assume that we are given a simple machine $M$. Let $x \in M$ be a state and
let $s = \enpr{s_1}{s_L}$ be a sequence. We define a greedy function recursively
\[
	\greedy{x, s} =
	\begin{cases}
		x & \text{if } L = 0, \\
		\greedy{y, s[1, L - 1]}  & \text{if there is } (y, x) \text{ such that } \lab{(y, x)} = s_L, \\
		\greedy{x, s[1, L - 1]}  & \text{otherwise}.
	\end{cases}
\]
In other words, the greedy function descends to parent states as fast as possible.

\begin{example}
Consider a machine $M_G$ given in Figure~\ref{fig:toy1:b} and
sequence $s = acbadbc$ given in Figure~\ref{fig:toycover}.
We have
\begin{align*}
	\greedy{x_6, acbadbc} & = \greedy{x_6, acbadb} = \greedy{x_6, acbad} = \greedy{x_5, acba}\\
	& = \greedy{x_5, acb} = \greedy{x_4, ac} = \greedy{x_2, a} = \greedy{x_1, \emptyset}  = x_1.
\end{align*}
\end{example}

The example suggests that a sequence covers an episode if the greedy function
reaches the source state in the corresponding machine. This holds in general:
the following proposition shows that we can use the greedy function to test for
coverage. Note that this crucial property is specific to machine induced from
episodes. It will not hold for a general machine.

\begin{proposition}
\label{prop:greedy}
Let $G$ be an episode, then a sequence $s$ covers $X$, a state in $\simple{M_G}$, if and only if 
$\greedy{X, s} = \set{i}$, the source state of $\simple{M_G}$.
\end{proposition}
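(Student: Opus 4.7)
The plan is to induct on $L = \abs{s}$. The base case $L = 0$ is immediate: both ``$s$ covers $X$'' and ``$\greedy{X, s} = \set{i}$'' collapse to $X = \set{i}$. For the inductive step let $a = s_L$ and $s' = s[1, L - 1]$, and split on whether $a \in \inc{X}$. By construction of $\simple{M_G}$, state $X$ has an incoming edge labeled $a$ precisely in that case, and the unique such parent is $Y = \parent{X; a}$. Thus when $a \notin \inc{X}$ we have $\greedy{X, s} = \greedy{X, s'}$ and it suffices to show that $s$ covers $X$ iff $s'$ covers $X$; when $a \in \inc{X}$ we have $\greedy{X, s} = \greedy{Y, s'}$ and it suffices to show that $s$ covers $X$ iff $s'$ covers $Y$.

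I will bridge machine and episode coverage throughout by appealing to Propositions~\ref{prop:cover} and~\ref{prop:simplecover}: a cover of $X$ corresponds to a cover of the prefix episode $H$ associated with some $x \in X$ via an injective map $f \colon V(H) \to \enset{1}{L}$. The first key observation is that if $a \notin \inc{x}$, no sink of $H$ is labeled $a$, so $f$ cannot hit position $L$: any $u$ with $f(u) = L$ would satisfy $\lab{u} = a$, hence be a non-sink, and then a child $u'$ of $u$ in $H$ would force $f(u') > L$. This observation settles Case~1 ($a \notin \inc{X}$) and the subcase $a \notin \inc{x}$ of Case~2 (where $x \in \stay{X; a} \subseteq Y$). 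The reverse directions are straightforward: in Case~1 use monotonicity of coverage in the sequence; in Case~2, if $y \in \sub{X; a}$ is covered by $s'$, extending the witness by position $L$ along the edge $(y, x)$ labeled $a$ covers $x \in X$, while $y \in \stay{X; a}$ already lies in $X$.

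The main obstacle is the remaining subcase of the $\Rightarrow$ direction in Case~2, namely $a \in \inc{x}$. Here I must exhibit a sink $v$ of $H$ labeled $a$ (which exists by assumption) such that $f|_{V(H) \setminus \set{v}}$ lives in $\enset{1}{L - 1}$ and therefore witnesses $s'$ covering $H - v$, whose corresponding state lies in $\sub{X; a} \subseteq Y$ (or equals the source, in which case $Y = \set{i}$ and is covered trivially). The idea is an exchange argument on $f^{-1}(L)$: if it is empty, pick any sink of $H$ labeled $a$; if $f^{-1}(L) = u$, then $\lab{u} = a$ and the same child-index argument forces $u$ to be a sink, so pick $v = u$. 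This exchange step uses exactly the property that distinguishes $M_G$-induced machines from arbitrary simple machines, namely that the sink structure of $\pre{G}$ guarantees an absorbing $a$-sink whenever one is needed; this is why Proposition~\ref{prop:greedy} is specific to machines built from episodes and would fail for a general simple machine.
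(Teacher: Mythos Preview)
Your proof is correct and follows essentially the same route as the paper. The only organisational differences are that the paper (i) dispatches the $\Leftarrow$ direction in one line (the greedy path itself witnesses coverage) rather than folding it into the induction, and (ii) packages your exchange argument on $f^{-1}(L)$ as a separate preparatory lemma (Lemma~\ref{lem:greedy}) before running the induction; the underlying ideas are identical.
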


\begin{corollary}
Let $G$ be an episode and let $X$ be the sink state of $\simple{M_G}$.
A sequence $s$ covers $G$ if and only if $\greedy{X, s} = \set{i}$, the source state of $\simple{M_G}$.
\end{corollary}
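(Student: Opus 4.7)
The plan is to obtain the corollary as an immediate chaining of the three propositions already stated, once we pin down which state of $\simple{M_G}$ should play the role of $X$. First, I would observe that since $G$ itself is a prefix episode (it belongs to $\pre{G}$) and no proper super-episode appears as a prefix, the state $x_G$ is the unique sink of $M_G$. Looking at the construction of $\simple{M_G}$, the set of states is $\bigcup_{x \in \sinks{M_G}} \closure{\set{x}}$, and $\closure{\set{x_G}}$ contains the singleton $\set{x_G}$ as an element with no outgoing edges in $\simple{M_G}$, because $X=\set{x_G}$ only ever appears as the target of edges, never as the parent of a larger prefix. Hence the sink of $\simple{M_G}$ is exactly $X=\set{x_G}$.

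Next, I would apply Proposition~\ref{prop:greedy} to this specific sink: a sequence $s$ covers $X=\set{x_G}$ in $\simple{M_G}$ if and only if $\greedy{X,s}=\set{i}$, the source state of $\simple{M_G}$. This is the right-hand side of the corollary for free.

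Then I would translate coverage in $\simple{M_G}$ back to coverage in $M_G$ using Proposition~\ref{prop:simplecover} applied to the singleton $X=\set{x_G}$: the sequence $s$ covers $X$ if and only if it covers $x_G$, the only element of $X$. Finally, Proposition~\ref{prop:cover} converts coverage of the state $x_G$ in $M_G$ into coverage of the episode $G$ itself. Chaining these three equivalences yields the corollary.

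There is essentially no hard step: the only thing that needs a moment of justification is the identification of the sink of $\simple{M_G}$ with $\set{x_G}$, which is purely a matter of unwinding the definition of $\closure{\cdot}$ and noting that $x_G$ is a sink of $M_G$. Once that is in place, the corollary is just Proposition~\ref{prop:greedy} composed with Propositions~\ref{prop:simplecover} and \ref{prop:cover} on the singleton $\set{x_G}$.
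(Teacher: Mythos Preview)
Your proposal is correct and matches the paper's intended approach: the paper states this as a corollary with no proof, precisely because it is the immediate chaining of Proposition~\ref{prop:greedy} with Propositions~\ref{prop:simplecover} and~\ref{prop:cover} applied to the singleton $X=\set{x_G}$. Your only added work, identifying the sink of $\simple{M_G}$ as $\set{x_G}$, is a routine unwinding of the definitions and is exactly the missing link the paper leaves implicit.
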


\subsection{Machine recognising minimal windows}
\label{sec:minmach}

So far we have constructed $M_G$ and $\simple{M_G}$ that recognise when
a sequence covers $G$. However, we are interested in finding out when
a sequence is a minimal window for $G$.

Assume that we are given an episode $G$ and let $M = \simple{M_G}$. Let $I =
\set{i}$ be the source state of $M$ and let $S = \set{x_G}$ be the sink state
of $M$.  We define two machines,
\begin{enumerate}
\item $M_1$ is obtained from $M$ by adding a new
source state, say $J$, and adding an edge $(J, I)$ for each possible label.
\item $M_2$ is obtained from $M$ by adding a new
sink state, say $T$ and adding an edge $(S, T)$ for each possible label.
\end{enumerate}
Both $M_1$ and $M_2$ are simple.

Let us first consider $M_1$.  Assume that we are given a sequence $s =
s_1\cdots s_L$ such that $\greedy{S, s} = I$. Then we know immediately that $s$
covers $G$ but $s[2, L]$ does not. Now let us consider $M_2$. Sequence $s[1, L
- 1]$ covers $G$ if and only $\greedy{T, s} = I$. Consequently, we need to
design a machine that simultaneously computes $\greedy{S, s}$ for $M_1$
and $\greedy{T, s}$ for $M_2$.

In order to do so we need to define a special machine.  Assume that we are
given two simple machines $M_1$ and $M_2$, and a set of pairs of states $\Theta
= \set{(x_i, y_i)}_{i = 1}^N$, where $x_i$ is a state in $M_1$ and $y_i$ is a
state in $M_2$.
We will now define a join machine, $M^* = \co{M_1, M_2, \Theta}$, that is guaranteed to contain the states from $\Theta$.
To define the states of this machine, let $z_1$ be a
state in $M_1$ and let $z_2$ be a state in $M_2$. We first define a set of pairs of states recursively
\[
	f(z_1, z_2) = (z_1, z_2) \cup \bigcup_{a \in \inc{z_1} \cup \inc{z_2}} f(\greedy{z_1, a}, \greedy{z_2, a}).
\]
We define the states of $M^*$ to be $\bigcup_{\theta \in \Theta} f(\theta)$.
Two states $\alpha = (y_1, y_2)$ and $\beta = (z_1, z_2)$ are connected with an
edge $(\alpha, \beta)$ if and only if $y_i = \greedy{z_i, a}$ and $a \in
\inc{z_1} \cup \inc{z_2}$.  It follows immediately that $M^*$ is
simple.

\begin{proposition}
\label{prop:join}
Let $M_1$ and $M_2$ be two simple machines. Let $\Theta$ be a set of pairs of states.
Define $M^* = \co{M_1, M_2, \Theta}$. Let $\alpha = (x_1, x_2)$ be 
a state in $M^*$. Then
$\greedy{\alpha, s} = (\greedy{x_1, s}, \greedy{x_2, s})$.
\end{proposition}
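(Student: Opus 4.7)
The plan is induction on $L = \abs{s}$. The base case $L = 0$ is immediate: both sides equal $\alpha = (x_1, x_2)$, since $\greedy{\alpha, \emptyset} = \alpha$ and $\greedy{x_i, \emptyset} = x_i$. For the inductive step, assume the equality holds for all states in $M^*$ and all sequences of length $L - 1$, and fix a sequence $s$ of length $L$ and a state $\alpha = (x_1, x_2)$ in $M^*$. I split on whether $s_L \in \inc{x_1} \cup \inc{x_2}$.

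If $s_L \notin \inc{x_1} \cup \inc{x_2}$, then by construction of $M^*$ no incoming edge to $\alpha$ carries the label $s_L$, so the greedy rule on $M^*$ gives $\greedy{\alpha, s} = \greedy{\alpha, s[1, L-1]}$. The inductive hypothesis rewrites this as $(\greedy{x_1, s[1, L-1]}, \greedy{x_2, s[1, L-1]})$, and the same ``stay'' rule applied inside $M_1$ and $M_2$ turns each component into $\greedy{x_i, s}$, which is what we want.

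If $s_L \in \inc{x_1} \cup \inc{x_2}$, set $y_i = \greedy{x_i, s_L}$ for $i = 1, 2$; note that $y_i$ is well defined because $M_i$ is simple, and $y_i$ equals either the unique parent of $x_i$ along an $s_L$-edge or $x_i$ itself when $s_L \notin \inc{x_i}$. By the definition of $M^*$ there is exactly one edge into $\alpha$ with label $s_L$, and it comes from $\beta = (y_1, y_2)$; moreover $\beta$ is a state of $M^*$ because the recursive definition of $f$ forces $f(\alpha) \subseteq f(\theta)$ for whichever $\theta \in \Theta$ generated $\alpha$. Hence $\greedy{\alpha, s} = \greedy{\beta, s[1, L-1]}$, which by the inductive hypothesis equals $(\greedy{y_1, s[1, L-1]}, \greedy{y_2, s[1, L-1]})$. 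A one-step unfolding of the greedy definition on each $M_i$ shows $\greedy{x_i, s} = \greedy{y_i, s[1, L-1]}$ (handling both the ``descend'' and the ``stay'' subcase uniformly), completing the step.

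The main obstacle is the asymmetric subcase in which $s_L$ labels an incoming edge in one of $M_1, M_2$ but not the other: the greedy walk in $M^*$ must move in lockstep with a genuine descent in one coordinate and a ``stay'' in the other. This is precisely why the construction of $M^*$ uses $\inc{z_1} \cup \inc{z_2}$ and pairs $(\greedy{z_1, a}, \greedy{z_2, a})$ rather than restricting to labels common to both; once this bookkeeping is in place, the induction goes through cleanly and no further machinery is needed.
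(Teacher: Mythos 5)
Your proof is correct and follows essentially the same route as the paper: induction on the length of $s$, peeling off the last symbol and using the edge definition of $M^*$ to show that a single greedy step in $M^*$ acts componentwise, then applying the induction hypothesis to the remaining prefix. Your explicit case split on whether $s_L \in \inc{x_1} \cup \inc{x_2}$ (and the remark that $\beta = (y_1, y_2)$ is indeed a state of $M^*$ by the closure in the definition of $f$) merely spells out what the paper compresses into the single statement $\greedy{\alpha, s_L} = (\greedy{x_1, s_L}, \greedy{x_2, s_L})$.
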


We can now define a machine that we will use to test whether sequence is a
minimal window of $G$. Let $M_1$, $M_2$, $S$ and $T$ as defined above.
Let $M^* = \co{M_1, M_2, \set{(S, T)}}$.  The following proposition
demonstrates how we can use $M^*$ to characterise the minimal window. 

\begin{proposition}
\label{prop:minmach}
Let $M_1$, $M_2$, $M^*$, and $I$ be as defined above.
Let $\alpha$ be a sink state of $M^*$.
Then, a sequence $s$ is a minimal window for $G$ if and only if $\greedy{\alpha, s} \in \Omega$,
where $\Omega = \set{(I, Y) \mid I \neq Y \text{ is a state of } M_2 }$.
\end{proposition}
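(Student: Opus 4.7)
The plan is to analyse what each coordinate of $\greedy{\alpha, s}$ computes using Proposition~\ref{prop:join}, and then translate the membership in $\Omega$ into the conjunction of conditions that characterise a minimal window. Writing $\alpha = (S, T)$ and applying Proposition~\ref{prop:join}, we have $\greedy{\alpha, s} = (u, v)$ with $u = \greedy{S, s}$ computed in $M_1$ and $v = \greedy{T, s}$ computed in $M_2$.

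For $M_2$, the only new edges are $(S, T)$, one per label, all incoming to the new sink $T$, so they cannot influence the greedy dynamics from any state other than $T$ itself. Starting at $T$, the last symbol $s_L$ always triggers a backward step to $S$ because an appropriate edge exists in $M_2$ for every label; thereafter the evolution coincides with that of $M = \simple{M_G}$. Hence $v = \greedy{S, s[1, L-1]}$ in $M$, and by Proposition~\ref{prop:greedy}, $v = I$ iff $s[1, L-1]$ covers $G$. For $M_1$, the added edges $(J, I)$ are all incoming to $I$, so the greedy dynamics from $S$ agree with those of $M$ up to the moment $I$ is first reached. In $M$, the source $I$ is absorbing; in $M_1$, any subsequent symbol forces a backward transition from $I$ to $J$, and $J$ is itself absorbing. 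Consequently $u = J$ iff the greedy function reaches $I$ strictly before consuming $s_1$, equivalently iff greedy on $s[2, L]$ starting from $S$ reaches $I$ in $M$, i.e.\ iff $s[2, L]$ covers $G$; and $u = I$ iff $s$ covers $G$ while $s[2, L]$ does not.

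Combining the two analyses, $\greedy{\alpha, s} \in \Omega$ iff $u = I$ and $v \neq I$, which decodes to the three conditions: $s$ covers $G$, $s[2, L]$ does not cover $G$, and $s[1, L-1]$ does not cover $G$. It remains to verify that these three conditions together are equivalent to $s$ being a minimal window for $G$. The forward direction is immediate from the definition of a minimal window. Conversely, any proper sub-window $s[i, j]$ with $(i, j) \neq (1, L)$ satisfies $i > 1$ or $j < L$, so it is contained in either $s[2, L]$ or $s[1, L-1]$; since coverage is preserved under enlargement of the window, such a sub-window cannot cover $G$ without one of the two forbidden larger sub-windows covering $G$ as well.

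The main obstacle is the $M_1$ analysis: one has to track precisely when greedy first reaches $I$ and disentangle ``reaches $I$ exactly at step $L$'' from ``reaches $I$ strictly earlier'', which is exactly the information recorded by letting greedy descend further to the auxiliary source $J$; everything else follows by stringing together Proposition~\ref{prop:join} and Proposition~\ref{prop:greedy} with the monotonicity of coverage.
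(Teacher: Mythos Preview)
Your proof is correct and follows essentially the same approach as the paper: decompose $\greedy{\alpha,s}$ via Proposition~\ref{prop:join}, analyse each coordinate by relating the greedy dynamics in $M_1$ and $M_2$ back to those in $M$, and invoke Proposition~\ref{prop:greedy}. You are somewhat more explicit than the paper in two places---tracking exactly when greedy in $M_1$ lands at $I$ versus $J$, and spelling out (via monotonicity of coverage) why the three conditions ``$s$ covers $G$, $s[2,L]$ does not, $s[1,L-1]$ does not'' are equivalent to $s$ being a minimal window---both of which the paper leaves implicit.
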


For the purpose of recognising minimal windows, there are lot of
redundant states in $M^*$. Any state that is not a child or part of
$\Omega$ can be removed and the outgoing edges reattached to the source state
without effecting the validity of Proposition~\ref{prop:minmach}. This is true
because once the greedy function reaches any such state then it will never
reach $\Omega$. To optimise we remove two types of non-source states: any of
form $(J, Y)$, where $J$ is the source state of $M_1$ and any state of form
$(Y, Y)$. We refer to the resulting machine as $\mwmach{G}$.

\begin{example}
\label{ex:join}
Consider an episode $G$ given in Figure~\ref{fig:join:a}.  The machine
$\simple{M_G}$ is given in Figure~\ref{fig:join:b} and the augmented versions
$M_1$ and $M_2$ are given in Figures~\ref{fig:join:c}--\ref{fig:join:d}.
These machines are then combined to $M^* = \co{M_1, M_2, \set{(x_1, x_0)}}$, given in Figure~\ref{fig:join:e}.

The final, simplified, machine is given in Figure~\ref{fig:join:f}. In order to a sequence
to be a minimal window for $G$, the greedy function must land either in $(x_5,
x_3)$, $(x_5, x_1)$, or in $(x_5, x_4)$. Note that many states from $M^*$
are removed. For example, if we are in $x_1x_0$ and we see any other symbol
than $c$, then we know that $s$ is not a minimal window since $s$ must end with $c$
in order to be one. 

\begin{figure}[htb!]
\begin{center}
\begin{minipage}[b]{3cm}
\subcaptionbox{$G$\label{fig:join:a}}{
\begin{tikzpicture}
\node[ep] (n1) at (0, 0.4) {$a$};
\node[ep] (n2) at (0, -0.4) {$b$};
\node[ep] (n3) at (0.8, 0) {$c$};
\draw[sedge] (n1) -> (n3);
\draw[sedge] (n2) -> (n3);
\end{tikzpicture}

}\hfill%
\subcaptionbox{$\simple{M_G}$\label{fig:join:b}}[1.6cm]{
\begin{tikzpicture}
\node[state] (n1) at (0, 0) {$x_1$};
\node[state] (n2) at (0, -0.8) {$x_2$};
\node[state] (n3) at (-0.4, -1.6) {$x_3$};
\node[state] (n4) at (0.4, -1.6) {$x_4$};
\node[state] (n5) at (0, -2.4) {$x_5$};
\draw[sedge] (n2) -> node[label, right] {$c$} (n1);
\draw[sedge] (n3) -> node[label, left] {$b$} (n2);
\draw[sedge] (n4) -> node[label, right] {$a$} (n2);
\draw[sedge] (n5) -> node[label, left] {$a$} (n3);
\draw[sedge] (n5) -> node[label, right] {$b$} (n4);
\end{tikzpicture}

}\\

\subcaptionbox{$M_1$\label{fig:join:c}}{
\begin{tikzpicture}
\node[state] (n1) at (0, 0) {$x_1$};
\node[state] (n2) at (0, -0.8) {$x_2$};
\node[state] (n3) at (-0.4, -1.6) {$x_3$};
\node[state] (n4) at (0.4, -1.6) {$x_4$};
\node[state] (n5) at (0, -2.4) {$x_5$};
\node[state] (n6) at (0, -3.2) {$x_6$};
\draw[sedge] (n2) -> node[label, right] {$c$} (n1);
\draw[sedge] (n3) -> node[label, left] {$b$} (n2);
\draw[sedge] (n4) -> node[label, right] {$a$} (n2);
\draw[sedge] (n5) -> node[label, left] {$a$} (n3);
\draw[sedge] (n5) -> node[label, right] {$b$} (n4);
\draw[sedge] (n6) -> (n5);
\end{tikzpicture}

}\hfill%
\subcaptionbox{$M_2$\label{fig:join:d}}{
\begin{tikzpicture}
\node[state] (n0) at (0, 0.8) {$x_0$};
\node[state] (n1) at (0, 0) {$x_1$};
\node[state] (n2) at (0, -0.8) {$x_2$};
\node[state] (n3) at (-0.4, -1.6) {$x_3$};
\node[state] (n4) at (0.4, -1.6) {$x_4$};
\node[state] (n5) at (0, -2.4) {$x_5$};
\draw[sedge] (n2) -> node[label, right] {$c$} (n1);
\draw[sedge] (n3) -> node[label, left] {$b$} (n2);
\draw[sedge] (n4) -> node[label, right] {$a$} (n2);
\draw[sedge] (n5) -> node[label, left] {$a$} (n3);
\draw[sedge] (n5) -> node[label, right] {$b$} (n4);
\draw[sedge] (n1) -> (n0);
\end{tikzpicture}

}
\end{minipage}%
\subcaptionbox{$\co{M_1, M_2, \set{(x_1, x_0)}}$\label{fig:join:e}}{
\begin{tikzpicture}
\node[state2, anchor = south] (n53) at (-1.1, 0.4) {$x_5x_3$};
\node[state, anchor = south] (n42) at (-1, 1.2) {$x_4x_2$};
\node[state2, anchor = south] (n51) at (0, 1.2) {$x_5x_1$};
\node[state, anchor = south] (n32) at (1, 1.2) {$x_3x_2$};
\node[state2, anchor = south] (n54) at (1.1, 0.4) {$x_5x_4$};
\node[state, anchor = south] (n41) at (-0.5, 2) {$x_4x_1$};
\node[state, anchor = south] (n31) at (0.5, 2) {$x_3x_1$};
\node[state, anchor = south] (n21) at (0, 2.8) {$x_2x_1$};
\node[state, anchor = south] (n10) at (-1, 3.6) {$x_1x_0$};
\node[state, anchor = south] (n61) at (0.2, 0.4) {$x_6x_1$};
\node[state, anchor = south] (n62) at (0, -0.4) {$x_6x_2$};
\node[state, anchor = south] (n63) at (-0.5, -1.2) {$x_6x_3$};
\node[state, anchor = south] (n64) at (0.5, -1.2) {$x_6x_4$};
\node[state, anchor = south] (n65) at (0, -2) {$x_6x_5$};
\node[state, anchor = south] (n1) at (-2.4, 2.8) {$x_1x_1$};
\node[state, anchor = south] (n2) at (-2.4, 2) {$x_2x_2$};
\node[state, anchor = south] (n3) at (-2.8, 1.2) {$x_3x_3$};
\node[state, anchor = south] (n4) at (-2, 1.2) {$x_4x_4$};
\node[state, anchor = south] (n5) at (-2.4, 0.4) {$x_5x_5$};
\draw[sedge2] (n2) -> node[label, right] {$c$} (n1);
\draw[sedge2] (n3) -> node[label, left] {$b$} (n2);
\draw[sedge2] (n4) -> node[label, right] {$a$} (n2);
\draw[sedge2] (n5) -> node[label, left] {$a$} (n3);
\draw[sedge2] (n5) -> node[label, right] {$b$} (n4);
\draw[sedge] (n54) -> node[label, right] {$a$} (n32);
\draw[sedge] (n53) -> node[label, left] {$b$} (n42);
\draw[sedge] (n42) -> node[label, right] {$c$} (n41);
\draw[sedge] (n51) -> node[label, right] {$b$} (n41);
\draw[sedge] (n51) -> node[label, right] {$a$} (n31);
\draw[sedge] (n32) -> node[label, right] {$c$} (n31);
\draw[sedge] (n41) -> node[label, left] {$a$} (n21);
\draw[sedge] (n31) -> node[label, left] {$b$} (n21);
\draw[sedge] (n21) -> node[label, right] {$c$} (n10);
\draw[sedge2] (n61) -> node[label, right] {$a, b$} (n51);
\draw[sedge2, bend left = 45] (n62) edge node[label, left] {$c$} (n51);
\draw[sedge2] (n62) -> node[label, right] {$c$} (n61);
\draw[sedge2] (n63) -> node[label, right] {$b$} (n62);
\draw[sedge2] (n64) -> node[label, right] {$a$} (n62);
\draw[sedge2] (n65) -> node[label, right] {$a$} (n63);
\draw[sedge2] (n65) -> node[label, right] {$b$} (n64);
\draw[sedge2] (n64) -> node[label, right, inner sep = 1pt] {$a, c$} (n54);
\draw[sedge2, bend right = 45] (n65) edge node[label, right] {$b$} (n54);
\draw[sedge2] (n63) -> node[label, right, inner sep = 1pt, near end] {$b, c$} (n53);
\draw[sedge2, bend left = 45] (n65) edge node[label, right] {$a$} (n53);
\draw[sedge2] (n2) -> node[label, below] {$c$} (n21);
\draw[sedge2] (n4) -> node[label, below] {$a$} (n42);
\draw[sedge2, bend left = 10] (n3) edge node[label, above, near start] {$b$} (n32);
\draw[sedge2] (n1) -> node[label, right] {$a, b$} (n10);
\draw[sedge2, bend left = 45] (n65) edge node[label, right] {} (n5);
\end{tikzpicture}}\hspace*{-0.51cm}%
\subcaptionbox{$\mwmach{G}$\label{fig:join:f}}{
\begin{tikzpicture}
\node[state, anchor = south] (i) at (0, 0) {$\psi$};
\node[state2, anchor = south] (n53) at (-1, 0.4) {$x_5x_3$};
\node[state, anchor = south] (n42) at (-1, 1.2) {$x_4x_2$};
\node[state2, anchor = south] (n51) at (0, 1.2) {$x_5x_1$};
\node[state, anchor = south] (n32) at (1, 1.2) {$x_3x_2$};
\node[state2, anchor = south] (n54) at (1, 0.4) {$x_5x_4$};
\node[state, anchor = south] (n41) at (-0.5, 2) {$x_4x_1$};
\node[state, anchor = south] (n31) at (0.5, 2) {$x_3x_1$};
\node[state, anchor = south] (n21) at (0, 2.8) {$x_2x_1$};
\node[state, anchor = south] (n10) at (-1, 3.6) {$x_1x_0$};
\draw[sedge] (i) -> (n53);
\draw[sedge] (i) -> node[label, right, near end] {$a$} (n42);
\draw[sedge] (i) -> (n51);
\draw[sedge] (i) -> node[label, left, near end] {$b$} (n32);
\draw[sedge] (i) -> (n54);
\draw[sedge] (n54) -> node[label, right] {$a$} (n32);
\draw[sedge] (n53) -> node[label, left] {$b$} (n42);
\draw[sedge] (n42) -> node[label, right] {$c$} (n41);
\draw[sedge] (n51) -> node[label, right] {$b$} (n41);
\draw[sedge] (n51) -> node[label, right] {$a$} (n31);
\draw[sedge] (n32) -> node[label, right] {$c$} (n31);
\draw[sedge] (n41) -> node[label, left] {$a$} (n21);
\draw[sedge] (n31) -> node[label, left] {$b$} (n21);
\draw[sedge] (n21) -> node[label, right] {$c$} (n10);
\draw[sedge] (i) .. controls (1.8, 0) and (2.8, 1) .. node[label, left, pos = 0.6] {$c$} (n21);
\draw[sedge] (i) .. controls (-1.8, 0) and (-2.4, 1) .. node[label, right, very near end] {$a, b$} (n10);
\end{tikzpicture}}%
\end{center}
\caption{Toy episode and related machines. Figure~\ref{fig:join:a} contains an episode $G$. A simple machine $\simple{M_G}$ is given in Figure~\ref{fig:join:b}.
Machines given in Figures~\ref{fig:join:c}--\ref{fig:join:d} are used to construct a machine to recognise a minimal window, Figure~\ref{fig:join:e}.
In order to a sequence to be a minimal window we must land to a highlighted node when starting from $x_1x_0$.
The states with dashed outgoing edges are redundant and can be be collapsed, resulting in a machine
given in Figure~\ref{fig:join:f}.}
\label{fig:join}
\end{figure}

\end{example}

\section{Computing Moments}
\label{sec:moments}
Now that we have defined a machine for recognising a minimal window,
we will use it to compute the needed probabilities.
In Section~\ref{sec:prob} we demonstrate how to use the machine
to compute the expected weight. In Section~\ref{sec:normal}
we show the asymptotic normality and in Section~\ref{sec:cross} we demonstrate
how to compute the variance.
We finish the section by considering computational complexity.

\subsection{Computing probabilities}
\label{sec:prob}

Proposition~\ref{prop:minmach} gives us means to express the minimal window
using a machine and the greedy function. In this section we demonstrate
how to compute probabilities that the greedy function lands in some particular
state.

Let $M$ be a simple machine. Let $Y$ be a set of states in $M$ and let $x$ be a
state in $M$. Let us first define
\[
	\pgreedy{x, Y, L} =  p(\greedy{x, s} \in Y \mid \abs{s} = L)
\]
to be the probability that a random sequence $s$ of length $L$ reaches one of the
states in $Y$.

\begin{proposition}
\label{prop:greedyprob}
Let $M$ be a simple machine. Let $Y$ be a set of states in $M$ and let $x$ be a
state in $M$.

Then it holds that for $L > 0$,
\begin{equation}
\label{eq:greedyprob}
	\pgreedy{x, Y, L}  = \sum_{a \in \Sigma} p(a) \pgreedy{\greedy{x, a},  Y,  L - 1}.
\end{equation}
For $L = 0$, we have
\[
	\pgreedy{x, Y, 0} =
	\begin{cases}
		1 & \text{if } x \in Y, \\
		0 & \text{if } x \notin Y.
	\end{cases}
\]
\end{proposition}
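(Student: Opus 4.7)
The plan is to derive the recursion by unpacking the definition of the greedy function one step, then conditioning on a single symbol of the random sequence and exploiting the independence of symbols. The base case $L=0$ is immediate from the definition of $\greedy{x, s}$ on the empty sequence, so only the inductive identity for $L > 0$ needs to be proved.

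The key lemma I would establish first, by a direct appeal to the definition, is the decomposition
\[
\greedy{x, s_1 \cdots s_L} = \greedy{\greedy{x, s_L}, s_1 \cdots s_{L-1}}.
\]
To see this, note that applying $\greedy{\cdot, \cdot}$ to the single-letter sequence $s_L$ returns the unique parent $y$ with $\lab{(y,x)} = s_L$ if such a parent exists (here the simplicity of $M$ guarantees uniqueness), and returns $x$ otherwise. In either case, the recursive definition of $\greedy{x, s_1 \cdots s_L}$ hands off to exactly the right-hand side of the displayed identity. Once this identity is in hand, I would condition on $s_L$:
\[
\pgreedy{x, Y, L} = \sum_{a \in \Sigma} p(a) \, p\!\left( \greedy{\greedy{x, a}, s_1 \cdots s_{L-1}} \in Y \,\big|\, s_L = a \right).
\]
Because $s_L$ is independent of $s_1, \ldots, s_{L-1}$, the conditioning can be dropped, and the remaining probability is exactly $\pgreedy{\greedy{x, a}, Y, L-1}$ by definition, which yields \eqref{eq:greedyprob}.

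There is no serious obstacle here; the only subtle point is the decomposition identity, which silently uses simplicity of $M$ so that $\greedy{x, a}$ is unambiguously defined (otherwise one would have to argue which of several same-labelled incoming edges is taken). Given simplicity, the whole argument is two lines of case analysis followed by the standard conditioning step.
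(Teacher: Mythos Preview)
Your proposal is correct and follows essentially the same approach as the paper's own proof: both establish the one-step decomposition $\greedy{x, s} = \greedy{\greedy{x, s_L}, s[1, L-1]}$, condition on the last symbol $s_L$, and invoke independence of the symbols to drop the conditioning. Your explicit remark that simplicity of $M$ is what makes $\greedy{x, a}$ well-defined is a helpful clarification that the paper leaves implicit.
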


\begin{example}
Consider a machine $\mwmach{G}$ given Figure~\ref{fig:join:f}.
Assume that the individual probabilities are $p(a) = 0.3$, $p(b) = 0.2$, and $p(c) = 0.5$.
The according to Proposition~\ref{eq:greedyprob}, $\pgreedy{x_4x_2, x_5x_3, 1} = 0.2$
and
\[
	\pgreedy{x_4x_2, x_5x_3, L} = 0.5\pgreedy{x_4x_2, x_5x_3, L - 1}
\]
for $L > 1$, which implies that $\pgreedy{x_4x_2, x_5x_3, L} = 0.2\times0.5^{L - 1}$.
We can verify this by observing
that the sequence of $L$ events that leads from $x_4x_2$ to $x_5x_3$ must have $L - 1$
events labelled as $c$ followed by one $b$.
\end{example}

To solve the needed quantities, we need to compute moments,
\[
	\moment{x, f, Y} = \sum_{L = 1}^\infty f(L) \pgreedy{x, Y, L}.
\]

Proposition~\ref{prop:minmach} now immediately implies that we can express the
needed statistics using moments.

\begin{proposition}
\label{prop:epimoment}
Assume an episode $G$.
Let $M = \mwmach{G}$ and let $\alpha$ and $\Omega$ be as in Proposition~\ref{prop:minmach}.
Let $Y_i$, $X_i$ and $Z_i$ be defined as in Section~\ref{sec:window}.
Then
\[
\begin{split}
	\mean{X_1} &= \moment{\alpha, f, \Omega}, \quad\text{for}\quad f(L) = 1, \\
	\mean{Y_1} &= \moment{\alpha, f, \Omega}, \quad\text{for}\quad f(L) = L, \\
	\mean{Z_1} &= \moment{\alpha, f, \Omega}, \quad\text{for}\quad f(L) = \rho^L, \\
	\mean{Z_1^2} &= \moment{\alpha, f, \Omega}, \quad\text{for}\quad f(L) = \rho^{2L}, \\
	\mean{Y_1Z_1} &= \moment{\alpha, f, \Omega}, \quad\text{for}\quad f(L) = \rho^LL. \\
\end{split}
\]
\end{proposition}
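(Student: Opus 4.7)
The plan is to reduce each of the five expectations to a sum over the possible values of $Y_1$ by invoking the characterisation of minimal windows given by Proposition~\ref{prop:minmach}.

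First I would establish the pointwise identity $P(Y_1 = L) = \pgreedy{\alpha, \Omega, L}$ for every $L \geq 1$. By Proposition~\ref{prop:minmach}, the event ``$s[1, L]$ is a minimal window for $G$'' coincides with $\greedy{\alpha, s[1, L]} \in \Omega$, whose probability under the i.i.d.\ symbol model is precisely $\pgreedy{\alpha, \Omega, L}$. The definition of $Y_1$ then yields the claim, provided that these events are pairwise disjoint in $L$. Disjointness is immediate: if $s[1, L]$ is minimal then $s[1, L']$ fails to cover $G$ for every $L' < L$, while for $L' > L$ the window $s[1, L']$ contains the proper sub-window $s[1, L]$ that already covers $G$, so $s[1, L']$ cannot itself be minimal. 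Consequently, $Y_1$ has probability mass function $L \mapsto \pgreedy{\alpha, \Omega, L}$ on $\set{1, 2, \ldots}$, with the remaining mass sitting at $0$.

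With the law of $Y_1$ in hand, each of the five identities reduces to choosing the correct weight $f$ in the definition of $\moment{\alpha, f, \Omega}$. For any function $f$ with $f(0) = 0$,
\[
\mean{f(Y_1)} = \sum_{L=1}^\infty f(L) P(Y_1 = L) = \sum_{L=1}^\infty f(L) \pgreedy{\alpha, \Omega, L} = \moment{\alpha, f, \Omega}.
\]
Applied to the appropriate functions, this yields all five claims: $X_1 = \spr{Y_1 \geq 1}$ matches $f(L) = 1$; $Y_1$ itself matches $f(L) = L$; since $i = 1$ we have $Z_1 = X_1 \rho^{Y_1}$, which gives $f(L) = \rho^L$ for $\mean{Z_1}$ and, using $X_1^2 = X_1$, $f(L) = \rho^{2L}$ for $\mean{Z_1^2}$; finally $Y_1 Z_1 = Y_1 \rho^{Y_1}$ on $\set{X_1 = 1}$ and vanishes on $\set{X_1 = 0}$, giving $f(L) = L \rho^L$.

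The proof is essentially routine once Proposition~\ref{prop:minmach} is invoked; the only step demanding a moment's thought is the disjointness argument that pins down the law of $Y_1$. Absolute convergence of the five sums is not a real obstacle: the total mass $\sum_L P(Y_1 = L)$ is at most $1$, and the only potentially problematic weight $f(L) = L$ is controlled by the observation that being a minimal window forces $s_L$ to match the label of some sink of $G$, which yields a geometric tail bound on $P(Y_1 = L)$ and hence a finite mean.
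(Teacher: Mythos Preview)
Your argument is correct and matches the paper's treatment: the paper simply remarks that Proposition~\ref{prop:minmach} ``immediately implies'' the result and gives no separate proof, so your write-up is in fact more detailed than the original.

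One small wrinkle worth tightening: the reason you give for the geometric tail of $P(Y_1 = L)$---that $s_L$ must carry a sink label---only yields a constant upper bound on each term, not exponential decay. The clean fix is to note instead that $\{Y_1 = L\}$ forces $s[1, L-1]$ not to cover $G$, an event whose probability decays exponentially in $L$ (this is exactly the bound established in the proof of Lemma~\ref{lem:finite} and reused in the proof of Proposition~\ref{prop:normal}); alternatively, simply invoke Lemma~\ref{lem:finite} directly for the finiteness of $\moment{\alpha, L, \Omega}$.
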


Note that the sum has infinite number of terms, hence we cannot compute this
by raw application of Proposition~\ref{prop:greedyprob}. Luckily, we can express
moments in closed recursive form. First, we need to show that the moments
we consider are finite.

\begin{lemma}
\label{lem:finite}
Let $M$ be a simple machine. Let $Y$ be a set of states in $M$. Assume that $p(a) > 0$ for all $a \in \Sigma$.
Assume that we are given a function $f$ such that $f(L)$ grows at polynomial rate.
If the source node is not contained in $Y$, then $m(x, f, Y)$ is finite for any state $x$.
\end{lemma}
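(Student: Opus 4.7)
The plan is to show that $\pgreedy{x, Y, L}$ decays geometrically in $L$, which combined with polynomial growth of $f$ yields a finite sum.

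The first ingredient is monotonicity of the greedy function along the DAG: for any state $y$ and symbol $a$, $\greedy{y, a}$ is either $y$ itself or a proper ancestor of $y$ in $M$. In particular, since the source $i$ has no incoming edges, $\greedy{i, s} = i$ for every $s$, so the source is absorbing. Because $i \notin Y$ by hypothesis, the event $\greedy{x, s} \in Y$ cannot occur once the greedy trajectory has reached $i$; hence $\pgreedy{x, Y, L}$ is bounded above by the probability that greedy, started at $x$ and driven by a random string of length $L$, has \emph{not} yet reached $i$.

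The second, and key, ingredient is a uniform escape bound: there exist an integer $D$ and a constant $\delta > 0$ such that for every state $y$ of $M$, $p(\greedy{y, t} = i \mid \abs{t} = D) \geq \delta$. Let $D$ be the length of the longest directed path in $M$, and for each $y$ fix a path from $y$ to the source of length $k \leq D$ along reversed edges with corresponding labels $a_0, \ldots, a_{k-1}$. Because $M$ is simple, feeding these labels as an appropriate suffix of $t$ forces the greedy function to walk deterministically from $y$ down to $i$, regardless of the remaining $D - k$ symbols. Since $q = \min_{a \in \Sigma} p(a) > 0$ by finiteness of $\Sigma$ and positivity of $p$, the probability of this event is at least $q^D$, so we may take $\delta = q^D$.

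Iterating the escape bound over $\lfloor L/D \rfloor$ disjoint blocks of length $D$ (the bound is uniform in the starting state, so the Markov-like iteration is legitimate) gives $\pgreedy{x, Y, L} \leq (1 - \delta)^{\lfloor L/D \rfloor}$. Since $f$ grows polynomially and $(1 - \delta)^{1/D} < 1$, the series $\sum_{L=1}^\infty f(L)(1 - \delta)^{\lfloor L/D \rfloor}$ converges, and hence $\moment{x, f, Y}$ is finite. The main obstacle is the escape bound: simplicity of $M$ is essential here, since without it the greedy function could jump to the wrong ancestor sharing the same label and the path-following argument would break.
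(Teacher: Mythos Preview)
Your argument is correct and takes a genuinely different route from the paper. The paper proves the exponential bound $\pgreedy{x, Y, L} \leq C_x q^L$ with $q = \sqrt{1 - \min_a p(a)}$ by induction on the DAG structure of $M$: the base case is the source (where the probability is zero since $i \notin Y$), and the inductive step unrolls one application of the recursion in Proposition~\ref{prop:greedyprob}, bounding the self-term $r\,\pgreedy{x, Y, L-1}$ by $q^2 C_x q^{L-1}$ and the parent terms via the inductive hypothesis, with $C_x$ chosen just large enough to close the inequality. Your argument instead bounds $\pgreedy{x, Y, L}$ by the probability that the greedy walk has not yet hit the source after $L$ symbols and controls that directly through a uniform one-block escape probability over disjoint length-$D$ blocks. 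The paper's approach yields a decay rate $q$ that is independent of the depth $D$ of the machine, whereas your rate $(1-\delta)^{1/D}$ with $\delta = (\min_a p(a))^D$ degrades as $D$ grows; on the other hand, your proof is self-contained---it does not invoke Proposition~\ref{prop:greedyprob}---and makes the role of simplicity explicit, exactly as you remark at the end.
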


\begin{proposition}
\label{prop:greedymom}
Let $M$ be a simple machine. Assume that we have a function $f$ mapping an integer to a real number.
Assume also for $L \geq 1$, we have $f(L - 1) = cf(L) + h(L)$ for some $c \in \real$ and a function $h$.
Assume that $f$ and $g$ grow at polynomial rate, at maximum.
Let $q = 1 - \sum_{a \in \inc{x}} p(a)$ and set $r = c - q$.
Let $i(y) = \pgreedy{y, Y, 0}f(0)$.
Then
\[
	\moment{x, f, Y} = \frac{1}{r}\big(qi(x) - \moment{x, h, Y} + \sum_{a \in \inc{x} \atop y = \greedy{x, a}} p(a)(\moment{y, f, Y} + i(y))\big).
\]
\end{proposition}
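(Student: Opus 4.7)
The plan is to derive a linear equation for $\moment{x, f, Y}$ by combining the algebraic recurrence $f(L-1) = c f(L) + h(L)$ with the probabilistic recurrence for $\pgreedy{x, Y, L}$ supplied by Proposition~\ref{prop:greedyprob}. The unknown $\moment{x, f, Y}$ will appear on both sides, with the factor $r = c - q$ arising naturally as the coefficient difference; solving then yields the claimed closed form.

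Concretely, I would multiply the relation $f(L-1) = c f(L) + h(L)$ through by $\pgreedy{x, Y, L}$ and sum over $L \geq 1$. The right-hand side collapses immediately to $c \moment{x, f, Y} + \moment{x, h, Y}$. For the left-hand side, I would reindex via $K = L - 1$ and expand $\pgreedy{x, Y, K+1}$ using Proposition~\ref{prop:greedyprob}. Since $M$ is simple, the incoming-label contributions correspond to unique parents $y_a = \greedy{x, a}$ for $a \in \inc{x}$, while the labels outside $\inc{x}$ leave $x$ fixed and together contribute a factor $q \pgreedy{x, Y, K}$. Each inner series $\sum_{K \geq 0} f(K) \pgreedy{y, Y, K}$ is precisely $\moment{y, f, Y}$ together with its missing $K = 0$ term $f(0) \pgreedy{y, Y, 0} = i(y)$. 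Assembling everything gives
\[
c \moment{x, f, Y} + \moment{x, h, Y} = \sum_{a \in \inc{x}} p(a)\bigl(\moment{y_a, f, Y} + i(y_a)\bigr) + q \bigl(\moment{x, f, Y} + i(x)\bigr),
\]
after which collecting the $\moment{x, f, Y}$ terms on the left and dividing by $r = c - q$ produces the claimed identity.

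The main obstacle is not the algebra but justifying that every series in sight converges absolutely, so that the reindexing and the interchange of the finite sum over $\inc{x}$ with the infinite sum over $L$ are legitimate. This is exactly the role of Lemma~\ref{lem:finite}: since $f$ grows polynomially and $h(L) = f(L-1) - c f(L)$ is therefore also polynomial, every moment $\moment{\cdot, f, Y}$ and $\moment{\cdot, h, Y}$ appearing in the derivation is finite under the hypotheses in which the proposition is actually applied. Once absolute convergence is secured, the remainder of the argument is pure bookkeeping.
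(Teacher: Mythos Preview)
Your proposal is correct and follows essentially the same route as the paper: both compute $\sum_{L \geq 1} f(L-1)\,\pgreedy{x, Y, L}$ two ways---once via the recurrence $f(L-1) = c f(L) + h(L)$ and once via Proposition~\ref{prop:greedyprob}---then equate and solve for $\moment{x, f, Y}$. Your explicit mention of Lemma~\ref{lem:finite} to justify absolute convergence is a welcome addition that the paper leaves implicit.
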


We can now use Proposition~\ref{prop:greedymom} to compute the moments given in Proposition~\ref{prop:epimoment}.

\begin{proposition}
\label{prop:recurse}
The identity $f(L - 1) = cf(L) + h(L)$ holds for the following functions,
\[
\begin{split}
	f(L) = 1, &\quad\text{for}\quad c = 1, \ h(L) = 0, \\
	f(L) = L, &\quad\text{for}\quad c = 1, \ h(L) = -1, \\
	f(L) = \rho^L, &\quad\text{for}\quad c = \rho^{-1}, \ h(L) = 0, \\
	f(L) = \rho^{2L}, &\quad\text{for}\quad c = \rho^{-2}, \ h(L) = 0, \\
	f(L) = \rho^{L}L, &\quad\text{for}\quad c = \rho^{-1}, \ h(L) = -\rho^{L - 1}.
\end{split}
\]
\end{proposition}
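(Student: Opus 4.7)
The proposition is a routine verification: for each of the five given pairs $(c, h)$, we simply need to plug $f(L-1)$ into the claimed identity and check that it matches $cf(L) + h(L)$. There is no real obstacle here; it is essentially a table of algebraic identities, and the role of the proposition is to catalogue the values of $c$ and $h$ that will later be fed into Proposition~\ref{prop:greedymom} to compute each of the five moments listed in Proposition~\ref{prop:epimoment}.

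My plan is to handle the five cases in turn, in order of increasing difficulty. For $f(L)=1$, both sides equal $1$, so $c=1$ and $h=0$ work. For $f(L)=L$, we write $L-1 = L + (-1)$, giving $c=1$ and $h(L) = -1$. For $f(L) = \rho^L$, we factor $\rho^{L-1} = \rho^{-1}\cdot\rho^L$, giving $c = \rho^{-1}$ and $h=0$; identically for $f(L) = \rho^{2L}$, where $\rho^{2(L-1)} = \rho^{-2}\cdot\rho^{2L}$.

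The only case that requires a line of algebra is $f(L) = \rho^L L$. Here I would expand
\[
f(L-1) = \rho^{L-1}(L-1) = \rho^{-1}\rho^L L - \rho^{L-1} = \rho^{-1} f(L) - \rho^{L-1},
\]
identifying $c = \rho^{-1}$ and $h(L) = -\rho^{L-1}$ as claimed.

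Finally, I would remark that in each case the functions $f$ and $h$ grow at most at polynomial rate (for fixed $0<\rho<1$, $\rho^L$ is in fact bounded), so the side hypothesis of Proposition~\ref{prop:greedymom} is met and the recursion there is applicable. No step presents any real obstacle; the content of the proposition is simply the observation that the five functions of interest satisfy a first-order linear recurrence with a polynomial-growth forcing term.
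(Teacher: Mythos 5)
Your verification is correct in all five cases, and the extra remark about polynomial growth (needed for Proposition~\ref{prop:greedymom} and Lemma~\ref{lem:finite}) is a sensible addition. The paper itself offers no proof of this proposition, treating it as exactly the routine algebraic check you carried out, so your argument matches the intended one.
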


\begin{example}
Consider machine $\mwmach{G}$ given in Figure~\ref{fig:join:e}. Let us
define $\Omega = \set{(x_5, x_3), (x_5, x_1), (x_5, x_4)}$. Assume also that
the probabilities for the symbols are $p(a) = 0.3$, $p(b) = 0.2$, and $p(c) =
0.5$. Let $f(L) = 1$.

Then using Proposition~\ref{prop:greedymom} we see that
\begin{align*}
	\moment{(x_4, x_2), f, \Omega} & = 0.2/0.5 = 0.4, \\
	\moment{(x_3, x_2), f, \Omega} & = 0.3/0.5 = 0.6, \\
	\moment{(x_4, x_1), f, \Omega} & = (0.2 + 0.5\times 0.4)/0.7 = 4/7, \\ 
	\moment{(x_3, x_1), f, \Omega} & = (0.3 + 0.5\times 0.6)/0.8 = 3/4,\\
	\moment{(x_2, x_1), f, \Omega} & = 0.3\times 4/7 + 0.2\times 3/4 = 0.32, \\
	\moment{(x_1, x_1), f, \Omega} & = 0.5\times 0.32 = 0.16,
\end{align*}
and the moment for the remaining states is equal to $0$.
\end{example}

Proposition~\ref{prop:greedymom} gives us means for a straightforward algorithm
\textsc{Moments} for computing moments (given in Algorithm~\ref{alg:moments}).
\textsc{Moments} takes as input a simple machine $M$, a map $i$ for initial
values, a map $h$ for update values, and a constant $c$.
Note that \textsc{Moments} is linear function of $i$ and $h$, that is,
\[
\begin{split}
	&\textsc{Moments}(M, k_1i_1 + k_2i_2, k_1h_1 + k_2h_2, c) =  \\
	&\quad k_1\textsc{Moments}(M, i_1, h_1, c) + k_2\textsc{Moments}(M, i_2, h_2, c)
\end{split}
\]
for any constants $k_1$ and $k_2$. We will use this property later for speed-ups.

\begin{algorithm}
\Input{a simple machine $M$, a map $i$ for initial values, a map $h$ for update values, and a constant $c$ for recursive update}
\Output{Moment of $f$ for every state $x \in M$}
	\For{ $x \in M$ in topological order} {
		$q \define  1 - \sum_{a \in \inc{x}} p(a)$\;
		$r \define c - q$\;
		$m(x) \define \frac{1}{r}\big(q i(x) - h(x) + \sum\limits_{a \in \inc{x} \atop y = \greedy{x, a}} p(a)(m(y) + i(y))\big)$\;
	}
\Return $m$\;
\caption{\textsc{Moments}$(M, i, h, c)$ computes moments using Proposition~\ref{prop:greedymom}.}
\label{alg:moments}
\end{algorithm}

\subsection{Asymptotic Normality}
\label{sec:normal}

We will now prove that our statistic approaches to
the normal distribution. The proof is not trivial since the variables $X_i$ and
$Z_i$ are not independent. Hence we will use Central Limit Theorem for strongly
mixing sequences.

Our first step is to show that the sequence the central limit theorem holds for $(Z_i, X_i)$. 

\begin{proposition}
\label{prop:normal}
Let $G$ be an episode.
Sequence $1/\sqrt{L}\sum_{k = 1}^L (Z_k, X_k) - (q, p)$ converges in distribution to $N(0, C)$, where
$q = \mean{Z_1}$, $p = \mean{X_1}$, and $C$ is a $2 \times 2$ covariance matrix,
$C_{11}  =  \var{Z_1} + 2D_{11}$, $C_{22} = \var{X_1} + 2D_{22}$,  $C_{21} = C_{12}  =   \cov{X_1, Z_1} + D_{12} + D_{21}$,  where
\begin{align*}
	D_{11}  = &  \sum_{i = 2}^\infty \mean{(Z_1 - q)(Z_i - q)}, &  D_{22}  = &  \sum_{i = 2}^\infty \mean{(X_1 - p)(X_i - p)}, \\
	D_{12}  = &  \sum_{i = 2}^\infty \mean{(Z_1 - q)(X_i - p)}, &  D_{21}  = &  \sum_{i = 2}^\infty \mean{(X_1 - p)(Z_i - q)}. \\
\end{align*}
\end{proposition}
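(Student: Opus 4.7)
The plan is to view $V_i := (Z_i, X_i) - (q, p)$ as a stationary functional of the i.i.d.\ source sequence $(s_j)$ and apply a central limit theorem for weakly dependent stationary sequences via a truncation argument. Because $(s_j)_{j \geq 1}$ is i.i.d., each $V_i$ is the same shift-invariant measurable functional of the future $(s_j)_{j \geq i}$, so $(V_i)$ is strictly stationary; since $X_i \in \set{0,1}$ and $\abs{Z_i} \leq 1$, the coordinates of $V_i$ have finite second moments (and Lemma~\ref{lem:finite} guarantees that the relevant moments derived from $\mwmach{G}$ are finite).

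The analytic core of the proof is a geometric tail bound
\[
	P(X_1 = 1, \ Y_1 > m) \leq A\gamma^m, \qquad \gamma \in (0,1).
\]
I would prove this by observing that, under the i.i.d.\ law of $(s_j)$, the sequence of states visited by the greedy walk on $\mwmach{G}$ starting from the sink $\alpha$ is a finite Markov chain whose only absorbing state is the collapsed source $\psi$ (the reset state has no incoming edges, so once reached it can never leave; every $\Omega$-state has at least one outgoing incoming-edge label and is therefore not absorbing). The event $\set{X_1 = 1, Y_1 > m}$ is contained in the event that the walk has not reached $\psi$ by time $m$, because a walk that is already at $\psi$ can never subsequently hit $\Omega$. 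Standard theory of finite absorbing Markov chains then gives geometric decay of the non-absorption probability at some rate $\gamma < 1$ determined by the smallest positive transition probability and the depth of $\mwmach{G}$.

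With the tail bound in hand, I would truncate: set $X_i^{(m)} = X_i \mathbf{1}\set{Y_i - i + 1 \leq m}$ and $Z_i^{(m)} = X_i^{(m)} \rho^{Y_i - i + 1}$, so that $V_i^{(m)} := (Z_i^{(m)}, X_i^{(m)}) - \mean{(Z_i^{(m)}, X_i^{(m)})}$ depends only on $s_i, \ldots, s_{i+m-1}$ and is therefore a stationary $m$-dependent sequence. The Hoeffding--Robbins CLT for $m$-dependent sequences gives $L^{-1/2} \sum_{k=1}^L V_k^{(m)} \Rightarrow N(0, C^{(m)})$, where $C^{(m)}$ is the obvious finite long-run covariance. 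The geometric tail yields $\mean{\abs{V_k - V_k^{(m)}}^2} \leq B\gamma^m$ together with summability of the autocovariances of the error process $V_k - V_k^{(m)}$, and the standard approximation scheme (Billingsley, \emph{Convergence of Probability Measures}, Theorem~4.2) transfers the CLT from $V_k^{(m)}$ to $V_k$ with limiting covariance $C = \lim_{m \to \infty} C^{(m)}$.

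The stationary long-run covariance has the form $C = \mean{V_1 V_1^T} + \sum_{i \geq 2}\bigl(\mean{V_1 V_i^T} + \mean{V_i V_1^T}\bigr)$, and reading off the four entries reproduces the formulas for $C_{11}$, $C_{22}$, and $C_{12} = C_{21}$ in the statement. The main obstacle is the geometric tail bound for $P(X_1 = 1, Y_1 > m)$; once the absorbing Markov chain argument on $\mwmach{G}$ is nailed down, the rest is a fairly mechanical truncation-plus-$m$-dependent-CLT argument together with a routine identification of the covariance entries with the series $D_{jk}$.
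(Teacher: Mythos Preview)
Your approach is correct and genuinely different from the paper's. The paper does not truncate; instead it shows directly that the scalar sequence $U_k = r^T\bigl((Z_k,X_k)-(q,p)\bigr)$ is $\alpha$-mixing with $\alpha(n)=O(n^{-5})$ and then invokes Billingsley's CLT for $\alpha$-mixing stationary sequences (the version requiring a finite $12$th moment), followed by the Cram\'er--Wold device via characteristic functions to pass from scalar to vector convergence. The mixing bound is obtained not through $\mwmach{G}$ but through $\simple{M_G}$: the key observation is that on the event $W=\{s[k+1,k+n-1]\text{ covers }G\}$, every $X_l,Z_l$ with $l\le k$ is measurable with respect to $s[l,k+n-1]$, so events in the past and future $\sigma$-fields become conditionally independent given $W$, whence $\alpha(n)\le P(W^c)=\pgreedy{v,V,n-1}$ with $V$ the non-source states of $\simple{M_G}$; the polynomial decay of this quantity then follows from Lemma~\ref{lem:finite}.

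Your route trades the single heavy $\alpha$-mixing CLT for a more elementary $m$-dependent CLT plus an approximation step. The geometric tail bound you need is essentially the content of the proof of Lemma~\ref{lem:finite} (exponential decay of $\pgreedy{\alpha,\Omega,L}$), so your absorbing-chain heuristic is correct but you could simply cite that lemma. Your approach avoids the slightly artificial $12$th-moment and $O(n^{-5})$ hypotheses and is arguably more self-contained; the paper's approach is shorter once the cited theorem is granted, and its decoupling event $W$ on $\simple{M_G}$ is a cleaner structural insight than tracking absorption in the larger machine $\mwmach{G}$. One small point: when you pass to the vector limit you should either invoke a vector-valued $m$-dependent CLT or, as the paper does, apply Cram\'er--Wold explicitly; also make sure the uniform-in-$L$ variance bound on $L^{-1/2}\sum_k(V_k-V_k^{(m)})$ is stated carefully, since that is what drives the Billingsley approximation lemma.
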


Since the central limit theorem holds for $(Z_i, X_i)$, we can apply this to obtain the main result.

\begin{proposition}
\label{prop:rationormal}
Let $G$ be an episode.
Let $p$, $q$ and $C$ be as in Proposition~\ref{prop:normal}. Define $\mu = q/p$.
Then
\[
	\sqrt{L}\Big(\frac{\sum_{k = 1}^L Z_k}{\sum_{k = 1}^L X_k} - \mu\Big)
\]
converges to $N(0, \sigma^2)$ as $L \to \infty$ , where $\sigma^2 = p^{-2}\pr{C_{11} -2\mu C_{12} + \mu^2C_{22}}$.
\end{proposition}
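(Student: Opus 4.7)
The plan is to deduce Proposition~\ref{prop:rationormal} from Proposition~\ref{prop:normal} by a standard application of the delta method. Setting $\bar{Z}_L = L^{-1}\sum_{k=1}^L Z_k$ and $\bar{X}_L = L^{-1}\sum_{k=1}^L X_k$, the ratio in the statement can be rewritten as $\bar{Z}_L/\bar{X}_L$, and by definition $\mu = q/p$, so the quantity of interest becomes $\sqrt{L}\big(g(\bar{Z}_L, \bar{X}_L) - g(q, p)\big)$ where $g(z, x) = z/x$.

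First, I would invoke Proposition~\ref{prop:normal} to obtain that $\sqrt{L}\big((\bar{Z}_L, \bar{X}_L) - (q, p)\big)$ converges in distribution to the bivariate normal $N(0, C)$. Since $p = \mean{X_1} > 0$ (the episode occurs with positive probability in a random sequence, otherwise the problem is degenerate), the function $g(z, x) = z/x$ is continuously differentiable in a neighbourhood of $(q, p)$, with gradient
\[
\nabla g(z, x) = \Big(\frac{1}{x}, -\frac{z}{x^2}\Big), \qquad \nabla g(q, p) = \Big(\frac{1}{p}, -\frac{\mu}{p}\Big).
\]

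Next I would apply the multivariate delta method: whenever $\sqrt{L}(W_L - w_0) \to N(0, C)$ and $g$ is $C^1$ at $w_0$, one has $\sqrt{L}(g(W_L) - g(w_0)) \to N(0, \nabla g(w_0)^T C \, \nabla g(w_0))$. Substituting our gradient yields asymptotic variance
\[
\sigma^2 = \frac{1}{p^2}C_{11} - 2\cdot\frac{1}{p}\cdot\frac{\mu}{p}C_{12} + \frac{\mu^2}{p^2}C_{22} = p^{-2}\big(C_{11} - 2\mu C_{12} + \mu^2 C_{22}\big),
\]
which is precisely the expression claimed.

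I do not foresee any serious obstacle; the only minor subtlety is ensuring that $p > 0$ so that $g$ is smooth at $(q, p)$ and the delta method is legitimate. This is automatic provided $G$ admits a minimal window of positive probability (which is a sensible assumption, since otherwise the score $\score{G}$ is ill-defined in the first place). All the structural work has already been done in Proposition~\ref{prop:normal}, which supplies the joint CLT for the dependent sequence $(Z_i, X_i)$ via strong mixing; the present proposition is essentially a deterministic consequence of that joint CLT.
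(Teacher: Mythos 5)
Your proposal is correct and matches the paper's own argument: the paper likewise applies the delta method (Theorem~3.1 of van der Vaart) to the function $f(x,y)=x/y$ using the joint CLT from Proposition~\ref{prop:normal}, computes the same gradient $(1/p,-\mu/p)$, and obtains $\sigma^2 = \nabla f(q,p)^T C \nabla f(q,p)$. Your additional remark that $p>0$ is needed for differentiability at $(q,p)$ is a small but sensible point the paper leaves implicit.
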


These results suggest that we can use $\Phi(-\score{G})$ as a $P$-value, where $\Phi$ is the cumulative density function
of the normal distribution. However, in practice we have several problems:
\begin{itemize}
\item The result is accurate only asymptotically. Moreover, the distribution of
$\score{G}$ can be heavily skewed so we need a large number of samples in order to estimate become accurate.

\item We do not have directly, the probabilities of individual items, instead
we will estimate the probabilities from the training sequence. This will introduce
some error in prediction making the $P$-values smaller than they should be.

\item We are computing a large number of statistical tests. In such case, it is
advisable to use some technique, for example, Bonferroni correction, to
compensate for the multiple hypotheses problem. However, it is not obvious
which technique should we use.
\end{itemize}

Because of these problems, instead of interpreting $\Phi(-\score{G})$ as a
$P$-value, we simply use $\score{G}$ to rank patterns and use it as a top-$K$
method.  Note that $\Phi$ is a monotonic function, hence the larger the score,
the smaller the $P$-value.

By studying the formulas in the above propositions we see that we can compute
the necessary statistics $p$ and $q$ using Proposition~\ref{prop:epimoment},
and consequently we can compute $\mu$. However, in order to compute the
variance $\sigma$ we need to compute $D_{11}$, $D_{12}$, $D_{21}$, and $D_{22}$
given in Proposition~\ref{prop:normal}. We will demonstrate a technique for
computing these statistics in the next section.

\subsection{Computing Cross-moments}
\label{sec:cross}
Our final step is to compute cross-moments given in
Proposition~\ref{prop:normal}.
In order to do so we first
need to prove a different formulation of these statistics. This formulation is
more fruitful as we no longer have to deal infinite sums. 

\begin{proposition}
\label{prop:cross}
Let $p$, $q$, $D_{11}$, $D_{12}$, $D_{21}$, and $D_{22}$ be as in Proposition~\ref{prop:normal}.
Define $v = \mean{Y_1}$ and  $w = \mean{Y_1Z_1}$.
Then
\begin{align*}
	D_{22} & =   \means{X_1\sum_{k = 2}^{Y_1}X_k} - (v - p)p,
	& D_{12} & =   \means{X_1Z_1\sum_{k = 2}^{Y_1}X_k} - (w - q)p, \\
	D_{21} & =  \means{X_1\sum_{k = 2}^{Y_1}Z_kX_k} - (v - p)q, 
	& D_{11} & =  \means{X_1Z_1\sum_{k = 2}^{Y_1}Z_kX_k} - (w - q)q. \\
\end{align*}
\end{proposition}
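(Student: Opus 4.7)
The plan is to prove all four identities by the same template, exploiting the fact that when $k > Y_1$ the variables $X_k$ and $Z_k$ become independent of $(X_1, Z_1, Y_1)$. The structural observation is that the underlying sequence $s_1, s_2, \ldots$ is i.i.d.\ and the event $\set{Y_1 = a}$ is measurable with respect to $s_1, \ldots, s_a$; on this event $X_1 = 1$ and $Z_1 = \rho^{a}$, whereas $X_k$ and $Z_k$ for $k > a$ depend only on $s_k, s_{k+1}, \ldots$ and so are independent of $(X_1, Z_1, Y_1)$ with marginals $p$ and $q$ by stationarity.

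I would first work out $D_{22}$ in full, since it displays the whole pattern. Writing $\mean{(X_1 - p)(X_k - p)} = \mean{X_1 X_k} - p^2$ and splitting on $\set{k \leq Y_1}$ versus $\set{k > Y_1}$, the independence observation turns the $k > Y_1$ contribution into $p\,\prob{0 < Y_1 < k}$, leaving
\[
\mean{(X_1 - p)(X_k - p)} = \mean{X_1 X_k \mathbf{1}_{k \leq Y_1}} - p\,\prob{Y_1 \geq k}.
\]
Summing over $k \geq 2$ and swapping sum with expectation converts the first piece into $\means{X_1 \sum_{k = 2}^{Y_1} X_k}$ and, via the tail identity $\sum_{k \geq 2} \prob{Y_1 \geq k} = v - p$ (using $\prob{Y_1 \geq 1} = p$), converts the second into $(v - p)p$, giving exactly the stated expression for $D_{22}$.

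The three remaining identities follow by the same template, replacing $X_1$ by $Z_1$ or $X_k$ by $Z_k$ as needed, after noting that $X_1 Z_1 = Z_1$ and $X_k Z_k = Z_k$ since $X$ is an indicator (so the $X_1 Z_1$ and $Z_k X_k$ appearing in the claim match my simplified forms). The only new ingredient is that $Z_1 \mathbf{1}_{Y_1 = a} = \rho^{a}\,\mathbf{1}_{Y_1 = a}$, which inserts a factor $\rho^a$ into the tail summation; swapping summations then gives $\sum_{k \geq 2}\sum_{a \geq k} \rho^a \prob{Y_1 = a} = \mean{(Y_1 - 1)Z_1} = w - q$, producing exactly the $(w - q)$ prefactor in the two $Z_1$-indexed formulas.

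The hard part will be justifying the interchange of expectation and infinite summation. Proposition~\ref{prop:normal} already asserts convergence of each $D_{ij}$, and the rearrangements above decompose them into term-wise non-negative sums dominated by $v = \mean{Y_1}$ and $w = \mean{Y_1 Z_1}$, both finite by Lemma~\ref{lem:finite} applied to the polynomially-growing functions $f(L) = L$ and $f(L) = L\rho^L$. Once absolute convergence is established, Fubini closes the gap and the rest is bookkeeping.
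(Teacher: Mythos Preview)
Your proposal is correct and follows essentially the same route as the paper: both arguments hinge on the observation that, conditional on $\set{Y_1 = n}$ with $k > n$, the variable $B_k$ (be it $X_k$ or $Z_k$) is independent of $(X_1,Z_1,Y_1)$, which lets the infinite sum over $k$ truncate at $Y_1$. The only cosmetic difference is bookkeeping---the paper conditions on $Y_1 = n$ and writes $\mean{A(B_k - b)\mid Y_1 = n} = 0$ to collapse the sum directly, whereas you split via indicators and invoke the tail-sum identity $\sum_{k \geq 2}\prob{Y_1 \geq k} = v - p$; both lead to the same $(v-p)$ and $(w-q)$ correction terms.
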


Our next step is to compute the moments.
To that end, let $M = \mwmach{G}$ be a
machine recognising the minimal window of $G$, let $\alpha$ be a sink state in $M$,
and let $\Omega$ be the states as in Proposition~\ref{prop:minmach}.  We will
study the probability $p(Y_1 = a, Y_k = a + b)$, where $a \geq k > 1$ and $b
\geq 1$. Let $u = s[k, a]$ and $v = s[a + 1, a + b]$.  The idea is to break the
probability into a sum of probabilities based on the state $\greedy{\alpha, v}$
and $\greedy{\alpha, u}$. These probabilities can be further decomposed into three
factors which we can then turn into moments using Proposition~\ref{prop:greedymom}.

Define a random variable $E = \greedy{\alpha, s[1, a]} \in \Omega$. This variable is true
if and only if $Y_1 = a$. In addition, define $F = \greedy{\alpha, s[k, a + b]}
\in \Omega$ and $G_\beta = \greedy{\beta, u} \in \Omega$.

Let us write $\Theta$ to be all proper intermediate states of $M$ between $\alpha$ and $\Omega$. 
Since $k > 1$, $Y_1 = a$ implies that $\greedy{\alpha, u} \in \Theta$. Similarly, $Y_k = a + b$
implies that $\greedy{\alpha, v} \in \Theta$. We can now write $p(Y_1 = a, Y_k = a + b)$ as

\begin{equation}
\label{eq:decompose}
\begin{split}
	&p(Y_1 = a, Y_k = a + b)  = p(E, F) \\
	&\quad=\sum_{\beta \in \Theta} p(E, F, \greedy{\alpha, v} = \beta)
	=\sum_{\beta \in \Theta} p(E, G_\beta, \greedy{\alpha, v} = \beta) \\
	&\quad=\sum_{\beta \in \Theta} p(E, G_\beta)p(\greedy{\alpha, v} = \beta) 
	=\sum_{\beta \in \Theta}\pgreedy{\alpha, \beta, b} p(E, G_\beta) \\
	&\quad=\sum_{\beta \in \Theta}\pgreedy{\alpha, \beta, b} \sum_{\gamma  \in \Theta}p(E, G_\beta, \greedy{\alpha, u} = \gamma) \\
	&\quad=\sum_{\beta \in \Theta}\pgreedy{\alpha, \beta, b} \sum_{\gamma  \in \Theta}p(\greedy{\gamma, s[1, k - 1]} \in \Omega, G_\beta, \greedy{\alpha, u} = \gamma) \\
	&\quad=\sum_{\beta \in \Theta}\pgreedy{\alpha, \beta, b} \sum_{\gamma  \in \Theta}\pgreedy{\gamma, \Omega, k - 1} p(G_\beta, \greedy{\alpha, u} = \gamma). \\
\end{split}
\end{equation}

The only non-trivial factor in Equation~\ref{eq:decompose} that we cannot solve using $M$  is
$p(\greedy{\beta, u} \in \Omega, \greedy{\alpha, u} = \gamma)$. To solve this we construct
yet another machine. Let $M^* = \co{M, M, \set{(\theta, \alpha) \mid \theta \in \Theta}}$
and let $\Omega^*_\gamma = \set{(\omega, \gamma) \mid \omega \in \Omega}$. Then Proposition~\ref{prop:join} implies that
\[
	p(G_\beta, \greedy{\alpha, u} = \gamma) = \pgreedy{(\beta, \alpha), \Omega^*_\gamma, a - k + 1}.
\]
This leads to
\begin{equation}
\label{eq:decompose2}
	p(Y_1 = a, Y_k = a + b) =\sum_{\beta, \gamma}\pgreedy{\alpha, \beta, b} \pgreedy{\gamma, \Omega, k - 1} \pgreedy{(\beta, \alpha), \Omega^*_\gamma, a - k + 1}.
\end{equation}

Let us write $A_k = Y_1 - k + 1$.  We now define a function $f$ by which we can express the missing cross-moments,
\[
	f(P, Q, R)  = \means{X_1\sum_{k = 2}^{Y_1}\rho^{P(k - 1) + QA_k + R(Y_k - Y_1)}X_k}.
\]

This function is particularly useful since we can now apply
Equation~\ref{eq:decompose2} and obtain a closed form using moments,

\begin{equation}
\label{eq:fmom}
\begin{split}
	f(P, Q, R) & = \sum_{k = 2}^\infty \sum_{b = 1}^\infty \sum_{a = k}^\infty \rho^{P(k - 1) + Q(a - k + 1) + Rb} p(Y_1 = a, Y_k = a + b) \\
	           & = \sum_{\beta, \gamma}\moment{\alpha, R, \beta} \moment{\gamma, P, \Omega} \moment{(\beta, \alpha), Q, \Omega^*_\gamma}.\\
\end{split}
\end{equation}

Let us now express the cross-moments using $f$. We see immediately that,
\[
\begin{split}
	\means{X_1\sum_{k = 2}^{Y_1}X_k} & = f(0, 0, 0), \\
	\means{X_1\sum_{k = 2}^{Y_1}Z_1X_k} & = \means{X_1\sum_{k = 2}^{Y_1}\rho^{(k - 1) + A_k }X_k} = f(1, 1, 0), \\
	\means{X_1\sum_{k = 2}^{Y_1}Z_kX_k} & = \means{X_1\sum_{k = 2}^{Y_1}\rho^{A_k + Y_k - Y_1}A_kX_k} = f(0, 1, 1), \\
	\means{X_1\sum_{k = 2}^{Y_1}Z_1X_kZ_k}  & = \means{X_1\sum_{k = 2}^{Y_1}X_k\rho^{(k - 1) + 2A_k + Y_k - Y_1}} = f(1, 2, 1). \\
\end{split}
\]

As a final step we describe how we can optimise computation of $f(P, Q, R)$.
First recall that \textsc{Moments}, given in Algorithm~\ref{alg:moments}, is
linear with respect to its parameters $i$ and $h$.
Consider Equation~\ref{eq:fmom}.  Instead of
computing the sum over $\beta$ explicitly, we can compute $\textsc{Moments}(M, i, 0, \rho^R)$,
where $i(\beta)$ is defined as $\sum_{\gamma}\moment{\gamma, \rho^P, \Omega} \moment{(\beta,
\alpha), \rho^Q, \Omega^*}$.  We can repeat this trick again to remove the explicit
sum over $\gamma$.  The pseudo-code taking into account these optimisations is
given in Algorithm~\ref{alg:computef}.

\begin{algorithm}
	$M \define \mwmach{G}$\;
	$\alpha \define$ sink state of $M$\;
	$\Omega \define$ as in defined in Proposition~\ref{prop:minmach}\;
	$\Theta \define$ intermediate states of $\alpha$ and $\Omega$\;
	$M^* \define \co{M, M, \set{(\theta, \alpha) \mid \theta \in \Theta}}$\;
	$i_1(\theta) \define I(\theta \in \Omega)$\;
	$m \define  \textsc{Moments}(M, i_1, 0, \rho^P)$\;
	\ForEach {state $x$ in $M^*$} {
		\If {$x = (\omega, \theta)$, where $\omega \in \Omega$ and $\theta \in \Theta$} {
			$i_2(x) \define m(\theta)$\;
		}
	}
	$m \define  \textsc{Moments}(M^*, i_2, \rho^Q)$\;
	\ForEach {$\theta \in \Theta$} {
		$i_3(\theta) \define m((\theta, \alpha))$\;
	}
	$m \define  \textsc{Moments}(M, i_3,0, \rho^R)$\;
	\Return $m(\alpha)$\;
\caption{\textsc{CrossMoments}}
\label{alg:computef}
\end{algorithm}

\begin{example}
Let us compute $f(0, 1, 1)$ for an episode $G$ given in
Figure~\ref{fig:join:a}.  Let $M = \mwmach{G}$, given in
Figure~\ref{fig:cross}. Note that this machine is the same machine given in
Figure~\ref{fig:join:f}.  Let us define $\Omega = \set{\omega_1, \omega_2,
\omega_3}$. Assume also that the probabilities for the symbols are $p(a) =
0.3$, $p(b) = 0.2$, and $p(c) = 0.5$ and assume that we selected $\rho = 1/2$.
Define $h_k(x) = \rho^{kx}$.

To compute $f(0, 1, 1)$ we need to compute moments from three different
machines.  The obtained moments from a previous machine is fed as initial
values to the next machine as shown in Figure~\ref{fig:cross}. We use $M$ for the
first and the third machine. The second machine is $\co{M, M, \enset{(\theta_1, \alpha)}{(\theta_5, \alpha)}}$ with redundant states removed.
This machine is given in
Figure~\ref{fig:cross}.

\begin{figure}[htb!]
\begin{center}
\hspace*{-0.6cm}
\begin{tikzpicture}
\node[state, anchor = south] (i) at (-4, -1.2) {$\psi$};
\node[state2, anchor = south] (n53) at (-5, -0.8) {$\omega_1$};
\node[state, anchor = south] (n42) at (-5, 0) {$\theta_4$};
\node[state2, anchor = south] (n51) at (-4, 0) {$\omega_2$};
\node[state, anchor = south] (n32) at (-3, 0) {$\theta_5$};
\node[state2, anchor = south] (n54) at (-3, -0.8) {$\omega_3$};
\node[state, anchor = south] (n41) at (-4.5, 0.8) {$\theta_2$};
\node[state, anchor = south] (n31) at (-3.5, 0.8) {$\theta_3$};
\node[state, anchor = south] (n21) at (-4, 1.4) {$\theta_1$};
\node[state, anchor = south] (n10) at (-5, 2.4) {$\alpha$};
\draw[sedge] (i) -> (n53);
\draw[sedge] (i) -> node[label, right, near end] {$a$} (n42);
\draw[sedge] (i) -> (n51);
\draw[sedge] (i) -> node[label, left, near end] {$b$} (n32);
\draw[sedge] (i) -> (n54);
\draw[sedge] (n54) -> node[label, right] {$a$} (n32);
\draw[sedge] (n53) -> node[label, left] {$b$} (n42);
\draw[sedge] (n42) -> node[label, right] {$c$} (n41);
\draw[sedge] (n51) -> node[label, right] {$b$} (n41);
\draw[sedge] (n51) -> node[label, right] {$a$} (n31);
\draw[sedge] (n32) -> node[label, right] {$c$} (n31);
\draw[sedge] (n41) -> node[label, left] {$a$} (n21);
\draw[sedge] (n31) -> node[label, left] {$b$} (n21);
\draw[sedge] (n21) -> node[label, right] {$c$} (n10);
\draw[sedge] (i) .. controls (-2.2, -1.2) and (-1.2, -0.2) .. node[label, left, pos = 0.6] {$c$} (n21);
\draw[sedge] (i) .. controls (-5.8, -1.2) and (-6.4, -0.2) .. node[label, right, very near end] {$a, b$} (n10);

\node[state, anchor = south] (ii) at (0, -0.4) {$\psi$};
\node[state2, anchor = south] (o1) at (-1, 0.4) {$\omega_1\theta_3$};
\node[state, anchor = south] (t4t1) at (-1, 1.2) {$\theta_4\theta_1$};
\node[state, anchor = south] (t5t1) at (1, 1.2) {$\theta_5\theta_1$};
\node[state2, anchor = south] (o3) at (1, 0.4) {$\omega_3\theta_2$};
\node[state, anchor = south] (t2a) at (-0.5, 2) {$\theta_2\alpha$};
\node[state, anchor = south] (t3a) at (0.5, 2) {$\theta_3\alpha$};
\node[state, anchor = south] (t4a) at (-1.5, 2) {$\theta_4\alpha$};
\node[state, anchor = south] (t5a) at (1.5, 2) {$\theta_5\alpha$};
\node[state, anchor = south] (t1a) at (0, -1.2) {$\theta_1\alpha$};
\draw[sedge] (ii) -> (o1);
\draw[sedge] (ii) -> node[label, right, near end] {$a$} (t4t1);
\draw[sedge] (ii) -> node[label, left, near end] {$b$} (t5t1);
\draw[sedge] (ii) -> (o3);
\draw[sedge] (ii) -> (t1a);
\draw[sedge] (o3) -> node[label, right] {$a$} (t5t1);
\draw[sedge] (o1) -> node[label, left] {$b$} (t4t1);
\draw[sedge] (t4t1) -> node[label, right] {$c$} (t2a);
\draw[sedge] (t4t1) -> node[label, right] {$c$} (t4a);
\draw[sedge] (ii) -> node[label, right, near end, inner sep = 1pt] {$a, b$} (t2a);
\draw[sedge] (ii) -> node[label, left, very near end, inner sep = 1pt] {$a, b$} (t3a);
\draw[sedge] (t5t1) -> node[label, right] {$c$} (t3a);
\draw[sedge] (t5t1) -> node[label, right] {$c$} (t5a);
\draw[sedge] (ii) .. controls (-1.1, 0) and (-1.9, 0.5) .. node[label, left, very near end] {$c$} (t4t1);
\draw[sedge] (ii) .. controls (-1.3, -0.3) and (-2.2, 0.5) .. node[label, left, very near end] {$a, b$} (t4a);
\draw[sedge] (ii) .. controls (1.1, 0) and (1.9, 0.5) .. node[label, right, very near end] {$c$} (t5t1);
\draw[sedge] (ii) .. controls (1.3, -0.3) and (2.2, 0.5) .. node[label, right, very near end] {$a, b$} (t5a);

\node[state, anchor = south] (p) at (4, -1.2) {$\psi$};
\node[state, anchor = south] (m53) at (3, -0.8) {$\omega_1$};
\node[state2, anchor = south] (m42) at (3, 0) {$\theta_4$};
\node[state, anchor = south] (m51) at (4, 0) {$\omega_2$};
\node[state2, anchor = south] (m32) at (5, 0) {$\theta_5$};
\node[state, anchor = south] (m54) at (5, -0.8) {$\omega_3$};
\node[state2, anchor = south] (m41) at (3.5, 0.8) {$\theta_2$};
\node[state2, anchor = south] (m31) at (4.5, 0.8) {$\theta_3$};
\node[state2, anchor = south] (m21) at (4, 1.4) {$\theta_1$};
\node[state, anchor = south] (m10) at (3, 2.4) {$\alpha$};
\draw[sedge] (p) -> (m53);
\draw[sedge] (p) -> node[label, right, near end] {$a$} (m42);
\draw[sedge] (p) -> (m51);
\draw[sedge] (p) -> node[label, left, near end] {$b$} (m32);
\draw[sedge] (p) -> (m54);
\draw[sedge] (m54) -> node[label, right] {$a$} (m32);
\draw[sedge] (m53) -> node[label, left] {$b$} (m42);
\draw[sedge] (m42) -> node[label, right] {$c$} (m41);
\draw[sedge] (m51) -> node[label, right] {$b$} (m41);
\draw[sedge] (m51) -> node[label, right] {$a$} (m31);
\draw[sedge] (m32) -> node[label, right] {$c$} (m31);
\draw[sedge] (m41) -> node[label, left] {$a$} (m21);
\draw[sedge] (m31) -> node[label, left] {$b$} (m21);
\draw[sedge] (m21) -> node[label, right] {$c$} (m10);
\draw[sedge] (p) .. controls (5.8, -1.2) and (6.8, -0.2) .. node[label, left, pos = 0.6] {$c$} (m21);
\draw[sedge] (p) .. controls (2.2, -1.2) and (1.6, -0.2) .. node[label, right, very near end] {$a, b$} (m10);

\begin{pgfonlayer}{background}
\draw[sedge3, bend right = 20] (n41) edge (o3);
\draw[sedge3, bend right = 20] (n31) edge (o1);
\draw[sedge3, bend left = 7] (t1a) edge (m21);
\draw[sedge3, bend left = 30] (t2a) edge (m41);
\draw[sedge3, out = 20, in = 90] (t3a) edge (m31);
\draw[sedge3, out = -50, in = 190] (t4a) edge (m42);
\draw[sedge3, bend right = 20] (t5a) edge (m32);
\end{pgfonlayer}

\end{tikzpicture}%
\end{center}
\caption{Machines needed to compute the cross-moments for an episode $G$ given in Figure~\ref{fig:join:a}.
The first and the third machines are $M = \mwmach{G}$ and the second machine  is
$\co{M, M, \enset{(\theta_1, \alpha)}{(\theta_5, \alpha)}}$. We simplified the machine by collapsing
all states containing $\psi$ to one state. The arrows between the machines indicate how the moments
from the previous machines are passed to the next machine as initial values.}
\label{fig:cross}
\end{figure}
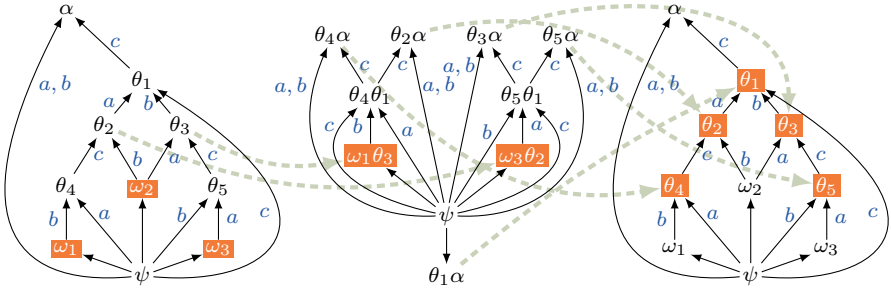

We start with $M$, and as initial values we set $1$ whenever a state
is in $\Omega$, and $0$ otherwise. This is equivalent to Example~\ref{ex:join}.
We need moments only for two states, $\theta_2$ and $\theta_3$, which are
\[
	\moment{\theta_2, h_0, \Omega} = 4/7 \quad\text{and}\quad \moment{\theta_3, h_0, \Omega} = 3/4.\\
\]

We now use the moments of $\theta_2$ and $\theta_3$ as initial values for $(\omega_3, \theta_2)$
and $(\omega_1, \theta_3)$, that is, we set $i_2((\omega_3, \theta_2)) = 4 / 7$ and
$i_2((\omega_1, \theta_3)) = 3/4$, and 0 for other states. We can now compute the moments,
\begin{align*}
	\moment{(\theta_4, \theta_1), h_1, i_2} & = (0.2 \times 3/4)/2 = 3 / 40, \\
	\moment{(\theta_5, \theta_1), h_1, i_2} & = (0.3 \times 4/7)/2 = 6 / 70, \\
	\moment{(\theta_2, \alpha), h_1, i_2} = \moment{(\theta_4, \alpha), h_1, i_2} & = (0.5 \times 3/40)/2 = 3 / 160, \\
	\moment{(\theta_3, \alpha), h_1, i_2} = \moment{(\theta_5, \alpha), h_1, i_2} & = (0.5 \times 6/70)/2 = 3 / 140, \\
\end{align*}
and $0$ for the remaining states. We feed these moments into initial values $i_3$
and compute the final moments,
\begin{align*}
	\moment{\theta_4, h_1, i_3} & = (0.5 \times 3 / 160) / 1.5 = 1 / 160,\\
	\moment{\theta_5, h_1, i_3} & = (0.5 \times 3 / 140) / 1.5 = 1 / 140,\\
	\moment{\theta_2, h_1, i_3} & = (0.3 \times 3 / 160 + 0.5 / 160) / 1.7 = 14 / 2720,  \\
	\moment{\theta_3, h_1, i_3} & = (0.2 \times 3 / 140 + 0.5 / 140) / 1.8 = 11 / 2520, \\
	\moment{\theta_1, h_1, i_3} & = (0.2 \times 14 / 2720  + 0.3 \times 11 / 2520) / 2 = 0.0012, \\
	\moment{\alpha, h_1, i_3} & = 0.5 \times 0.001 / 2 = 0.0003. \\
\end{align*}
Consequently, $f(0, 1, 1) =  \moment{\alpha, h_1, i_3} = 0.0003$.

\end{example}

\subsection{Computational complexity}
Let us now finish this section by discussing the computational complexity.
Given a machine $M$, evaluating moments will take $O(V(M) + E(M))$ time.
Hence, we need to study the sizes of our machines.  Given an episode $G$ with
$N$ nodes, the first machine $\mach{\efam{G}}$ may have $2^N$ states. This
happens if $G$ is a parallel episode. In practice, as we will see in the
experiments, this is not a problem since $N$ is typically small.

Exponentiality is (most likely) unavoidable since testing whether a sequence
covers an episode is known to be \NP-hard problem~\citep{tatti:11:mining}, and since we can
use $\mach{\efam{G}}$ to test coverage in polynomial time w.r.t.  the states in
$\mach{\efam{G}}$ we must have episodes for which we have exponential number of
states.

Simplifying $\mach{\efam{G}}$ may also lead to an exponential number of nodes.
This may happen if we have a lot of unrelated nodes with same labels. Typically,
this will not happen, especially, if the sequence has a large alphabet.
Moreover, we can avoid this problem by mining only strict episodes~\citep{tatti:12:mining}
in which we require that if there are two nodes with the same label, then one
of the nodes must be an ancestor of the other. For such episodes,
$\mach{\efam{G}}$ is already simple.

Computing a joint machine $\co{M, M}$ may result into a machine having $\abs{V(M)}^2$
states. In practice, the amount of states in $\mwmach{G}$ is much smaller since
not all pairs are considered. Similarly, a machine needed for computing cross-moments
may have $O(\abs{V(\mwmach{G})}^2)$ nodes. We will see that in our experiments the number
of states and edges remains small, making the method fast in practice.

\section{Related Work}
\label{sec:related}

Our approach can be seen as an extension of~\citep{tatti:09:significance} where
we developed a statistical test based on average length of minimal windows.  We
used a recursive update similar to the one given in
Proposition~\ref{prop:greedyprob}, however we capped the length of minimal
windows and computed explicitly the probabilities of an episode having a
minimal window of a certain length. In this work we avoid this by using
Proposition~\ref{prop:epimoment}. Additional limitation
of~\citep{tatti:09:significance} is that we were forced to simulate
cross-moments where in this work we compute them analytically.

Statistical measures for ranking episodes have been considered
by~\citet{gwadera:05:reliable,gwadera:05:markov} in which the authors
considered episode to be significant if the episode occurs too often or not
often enough in windows of fixed size.  As a background model the authors used
independence model in~\citep{gwadera:05:reliable} and Markov-chain model
in~\citep{gwadera:05:markov}. The authors' approach
in~\citep{gwadera:05:reliable} is similar to ours: First they construct a
finite state machine, essentially $\mach{G}$, and use recursive update similar
Proposition~\ref{prop:greedyprob} in order to compute the mean, that is, the
likelihood that the sequence of length $L$ covers the episode under
independence assumption.  The main difference between our approach and
theirs is that we base our measure directly on compactness, the average length of a minimal window,
while they base their measure on occurrence, that is, in how many windows the episode
occurs.

Working with the general episodes is difficult for two main reasons. Firstly,
general episodes are more prone to suffer from pattern explosion due to the
fact that there are so many directed acyclic graphs. Secondly, the simplest
task such as testing whether a sequence contains an episode is a \NP-hard
problem~\citep{tatti:11:mining}. Several subclasses of general episodes have
been suggested. \citet{pei:06:discovering} suggested mining episodes from set
of strings, sequences of unique symbols. \citet{tatti:12:mining} 
suggested discovering closed strict episodes. An episode is strict if two nodes
with the same label are always connected. \citet{achar:12:discovering} 
suggested discovering episodes with unique labels possibly with some
additional constraints, for example, the number of paths in a DAG.  The authors
suggested a score based on how evenly unconnected nodes occur in front of each
other. \citet{tatti:11:mining} considered a broader class of
episodes in which nodes are allowed to have multiple labels.

\citet{casas-garriga:03:discovering} proposed a criterion for
episodes by requiring that the consecutive symbols in a sequence should only
within a specified bound. While this approach attacks the problem of fixed
windows, it is still a frequency-based measure. This measure, however, is not
monotonic as it is pointed out by~\citet{meger:04:constraint-based}.  It would
be useful to see whether we can compute an expected value of this measure so
that we can compute a $P$-value based on some background model.

In a related work, \citet{cule:09:new} considered parallel episodes
significant if the smallest window containing each occurrence of a symbol of an
episode had a small value. Their approach differ from ours since the smallest
window containing a fixed occurrence of a symbol is not necessarily the minimal
window. Also, they consider only parallel episodes whereas we consider more
general DAG episodes. An interesting approach has been also taken
by~\citet{calders:07:mining} where the authors define a windowless frequency
measure of an itemset within a stream $s$ to be the frequency starting from a
certain point. This point is selected so that the frequency is maximal.
However, this method is defined for itemsets and it would be fruitful to see
whether this idea can be extended into episodes.

Finite state machines have been used
by~\citet{troncek:01:episode,hirao:01:practical} for discovering episodes.
However, their goal is different than ours since the actual machine is built
upon a sequence and not the episode set and it is used for discovering episodes
and not computing the coverage.

\section{Experiments}
In this section we present our experiments with the quality
measure using synthetic and real-world text sequences.
\label{sec:experiments}
\subsection{Datasets}
We conducted our experiments with several synthetic and real-world sequences. 

The first synthetic sequence, \emph{Ind} consists of $40\,000$ events drawn
independently and uniformly from an alphabet of $1\,000$ symbols.  The second
synthetic sequence, \emph{Plant} also contains $40\,000$ events independently and
uniformly from an alphabet of $1\,000$ symbols but in addition we planted 5 serial
episodes. Each episode consisted of 5 nodes, each node with a unique label.  We
planted each episode $100$ times and we added a gap between two consecutive
events with a $10\%$ probability.

Our third dataset, \emph{Moby}, is the novel Moby Dick by Herman
Melville.\!\footnote{The book was obtained from
\url{http://www.gutenberg.org/etext/15}.}  Our fourth sequence, \emph{Nsf}
consists of 739 first NSF award abstracts from 1990.\!\footnote{The abstracts
were obtained from \url{http://kdd.ics.uci.edu/databases/nsfabs/nsfawards.html}}
Our final dataset, \emph{Address}, consists of 
inaugural addresses of the presidents of the United States.\!\footnote{The
addresses were obtained from~\url{http://www.bartleby.com/124/pres68}.} To
avoid the historic concept drift---early speeches have different vocabulary than the later ones---we entwined the speeches by first taking the
odd ones and then even ones. Our fourth dataset, \emph{Jmlr}, consists of abstracts
from Journal of Machine Learning Research.\!\footnote{The abstracts were obtained from \url{http://jmlr.csail.mit.edu/}}
The sequences were processed using the Porter Stemmer and the stop words were removed.
The basic characteristics of sequences are summarised in Table~\ref{tab:basic}.

\begin{table}[htb!]
\caption{Characteristics of the sequences.  The second column contains the number of symbols in the
sequence. The third column contains the size of the alphabet of each sequence.}
\begin{center}
\begin{tabular}{lrr}
\toprule
Sequence & length & $\abs{\Sigma}$ \\
\midrule
\emph{Ind} & $40\,000$ & $1\,000$ \\
\emph{Plant} & $40\,000$ & $1\,000$ \\[1mm]
\emph{Moby} & $105\,719$ & $10\,277$ \\
\emph{Address} & $62\,066$ & $5\,295$ \\
\emph{Jmlr} & $75\,646$ & $3\,859$ \\
\emph{Nsf} & $35\,370$ & $4\,592$ \\
\bottomrule
\end{tabular}
\end{center}
\label{tab:basic}
\end{table}

\subsection{Experimental Setup}
Our experimental setup mimics the framework setup by~\citet{webb:07:discovering}
in which the data is divided into two parts, the first part is used for
discovering the patterns and the second part for testing whether the discovered
patterns were significant. We divided each sequence into two parts of
equivalent lengths. We used the first sequence for discovering the candidate
episodes and training the independence model. Then we tested the discovered episodes
against the model using the second sequence. We set parameter $\rho$ to $1/2$.

To generate candidate episodes we used a miner given by~\citet{tatti:12:mining}.
This miner discovers episodes in a breath-first fashion, that is, an episode is
tested if and only if all its sub-episodes are frequent. The miner outputs
closed\footnote{An episode is closed if there are no superepisode with the same
support.} and strict episodes. Requiring episodes to be closed reduces
redundancy between candidates considerably as there are typically many episodes
describing the same set of minimal windows.
The alphabet is large in our sequences, which implies
that it is quite unlikely to see the same symbol twice within a short window.
Consequently, there are only few non-strict frequent episodes.

As a constraint we required that the number of non-overlapping minimal windows
must exceed certain threshold in the first sequence.  This is a monotonic
condition that allows us to discover all candidates efficiently.  During mining
we also put an upper limit for minimal windows. The parameters and the numbers
of candidates are given in Table~\ref{tab:par}.

\begin{table}[htb!]
\caption{Parameters used for mining candidate episodes. The second column
contains the allowed maximal length of a minimal window during mining.  The
third column contains threshold for the number of disjoint minimal windows.
The fourth column contains the number of non-singleton episodes. }
\begin{center}
\begin{tabular}{lrrr}
\toprule
Sequence & max window & threshold & \# of episodes \\
\midrule
\emph{Ind} & $15$ & $4$ & $1\,249$ \\
\emph{Plant} & $15$ & $5$ & $734$ \\[1mm]
\emph{Moby} & $20$ & $10$ & $6\,043$ \\
\emph{Address} & $20$ & $4$ & $41\,888$ \\
\emph{Jmlr} & $20$ & $10$ & $14\,528$ \\
\emph{Nsf} & $20$ & $15$ & $2\,845$ \\
\bottomrule
\end{tabular}
\end{center}
\label{tab:par}
\end{table}

\subsection{Computational complexity}

Let us first study computational complexity in practice.  As we pointed out
earlier it is possible that sizes of structures needed to compute the score
become exponentially large. To demonstrate the sizes in practice we computed the average number of
states and edges in machines used to compute the score. The results are given
in Table~\ref{tab:sizes}.

\begin{table}[htb!]
\caption{Average sizes of machines used for ranking episodes. Even columns,
labelled with $\abs{V}$, contain the number of nodes, while odd columns,
labelled with $\abs{E}$, contain the number of edges. The first machine
$\mach{\efam{G}}$ recognises when episode is covered, the second machine
$\simple{\mach{\efam{G}}}$ is a simplification of $\mach{\efam{G}}$. The third
machine $\mwmach{G}$ tests whether a sequence is a minimal window, and the last
machine $M^*$ is used for computing cross-moments, see Section~\ref{sec:cross}. The last columns is the time needed to rank the discovered episodes per dataset.}
\begin{center}
\begin{tabular}{lrrcrrcrrcrrr}
\toprule
& \multicolumn{2}{l}{$\mach{\efam{G}}$} &
& \multicolumn{2}{l}{$\simple{\mach{\efam{G}}}$} &
& \multicolumn{2}{l}{$\mwmach{\efam{G}}$} &
& \multicolumn{2}{l}{$M^*$} \\
\cmidrule{2-3} \cmidrule{5-6} \cmidrule{8-9} \cmidrule{11-12}
Sequence & $\abs{V}$ & $\abs{E}$ && $\abs{V}$ & $\abs{E}$ && $\abs{V}$ & $\abs{E}$ && $\abs{V}$ & $\abs{E}$ & time (s) \\
\midrule
\emph{Ind} & $3.8$ & $3.7$ &  & $3.8$ & $3.7$ &  & $4.7$ & $3.7$ &  & $3.5$ & $1.7$ & $0.34$\\
\emph{Plant} & $4.4$ & $4.5$ &  & $4.4$ & $4.5$ &  & $6.6$ & $6.6$ &  & $11.5$ & $13.3$ & $0.28$\\[1mm]
\emph{Moby} & $3.9$ & $3.6$ &  & $3.9$ & $3.6$ &  & $4.7$ & $3.9$ &  & $4.1$ & $2.4$ & $1.37$\\
\emph{Address} & $4.6$ & $5$ &  & $4.6$ & $5$ &  & $6.6$ & $6.3$ &  & $8.7$ & $7.4$ & $4.40$\\
\emph{Jmlr} & $4.7$ & $5$ &  & $4.7$ & $5$ &  & $6.6$ & $6.2$ &  & $8.5$ & $6.8$ & $3.55$\\
\emph{Nsf} & $7.3$ & $9.7$ &  & $7.3$ & $9.7$ &  & $14.3$ & $18.1$ &  & $39.6$ & $49.3$ & $1.09$\\
\bottomrule
\end{tabular}
\end{center}
\label{tab:sizes}
\end{table}

From these results we see that the number of nodes and edges stay small. This
is due to the fact that majority of episodes are small, typically with 2--3
nodes. Simplification does not add any new nodes or edges since we use strict
episodes, where nodes with the same label must be connected, consequently,
$\mach{\efam{G}}$ is simple. Number of nodes and edges are at highest for
$M^*$, a machine needed to compute cross-moments for \emph{Nsf} data. This is
due to the fact that \emph{Nsf} contains a lot of phrases where the same words
are being repeated. As a consequence, we discover large episodes which in turn
generate large machines. Running times given in the last column of
Table~\ref{tab:sizes} imply that ranking is fast. Ranking discovered episodes
is done within few seconds.  For example, in \emph{Address} ranking $40\,000$
episodes takes less than 5 seconds.

We consider only closed and strict episodes as candidates. If we consider also
non-closed episodes, then the distribution of episode types may change as long
closed episodes tend to be serial. Consequently, we will have more general
episodes. This may result in larger machines as serial episodes have the
simplest machines.

\subsection{Significant Episodes}

Let us first consider \emph{Plant} dataset. The first 5 episodes according to
our ranking were exactly the planted patterns. The scores of these patterns are
between $99\,500$ and $84\,000$. The following patterns are typically a
combination of an original pattern with an additional parallel symbol or a
subset of an original pattern. The scores of these patterns, though significant,
are dropping fast: the score of the 6th pattern is $67\,000$, the score of 7th
pattern is $42\,000$.
Note that if we used frequency (or any other monotonic measure) as a score, subsets
of these planted patterns would have appeared first in the list.

Our next step is to see what types of episodes does our score preferred. In
order to do that, we first consider Figure~\ref{fig:nodecnt} where we have
plotted the number of nodes in an episode as a function of rank.
We see that top patterns tend to have more nodes. This is especially prominent
with \emph{Address} and \emph{Nsf} datasets.

\begin{figure}[htb!]
\begin{center}
\begin{tikzpicture}
\begin{axis}[xlabel={rank}, ylabel= {number of nodes},
    width = 10cm,
    height = 5cm,
    xmax = 1000,
    ymin = 1.8,
    ymax = 5.5,
    cycle list name=yaf,
    no markers,
    xticklabel = {$\scriptstyle\pgfmathprintnumber{\tick}$},
    yticklabel = {$\scriptstyle\pgfmathprintnumber{\tick}$},
	legend pos = outer north east
    ]

\addplot table[x expr = {\lineno * 10 + 1}, y index = 0, header = false] {mobytopnodecnt.dat};
\addplot table[x expr = {\lineno * 10 + 1}, y index = 0, header = false] {addresstopnodecnt.dat};
\addplot table[x expr = {\lineno * 10 + 1}, y index = 0, header = false] {jmlrtopnodecnt.dat};
\addplot table[x expr = {\lineno * 10 + 1}, y index = 0, header = false] {nsftopnodecnt.dat};

\legend{\emph{Moby}, \emph{Address}, \emph{Jmlr}, \emph{Nsf}};
\pgfplotsextra{\yafdrawaxis{1}{1000}{1.8}{5.5}}
\end{axis}
\end{tikzpicture}
\end{center}
\caption{Number of nodes in top-$1\,000$ episodes as a function of rank. Counts are smoothed by computing averages of batches of ten episodes}
\label{fig:nodecnt}
\end{figure}
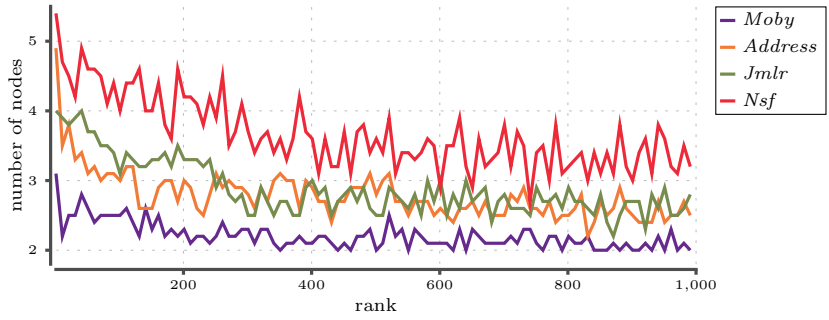

We continued our experiments by computing the proportion of episode types, that
is, whether an episode is a parallel, serial, or general, as a function of
rank, given in Figure~\ref{fig:type}.  From figures we see that distribution
depends heavily on a sequence. Serial episodes tend to be distributed evenly,
parallel episodes tend to be missing from the very top and general episodes
tend to be missing from the very bottom.

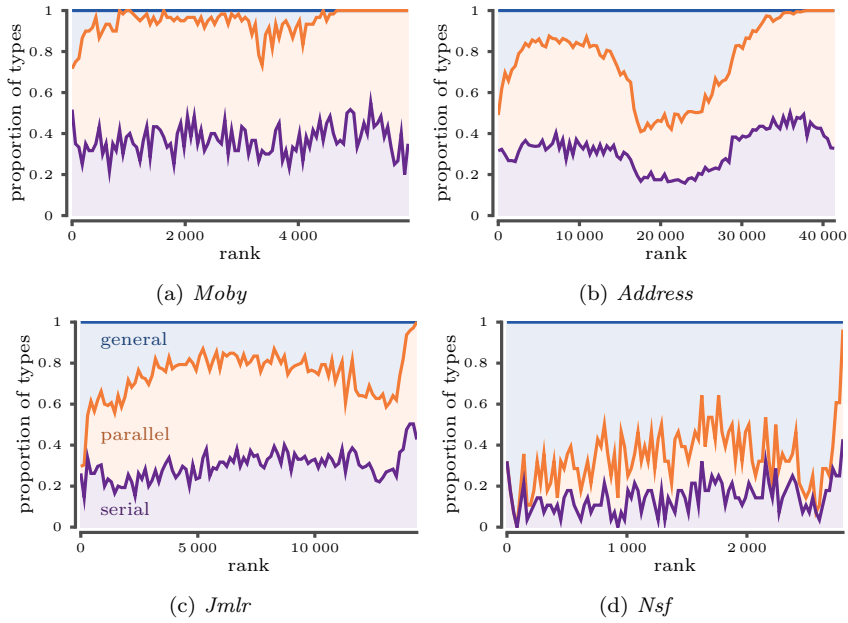
\begin{figure}[htb!]
\begin{center}
\subcaptionbox{\emph{Moby}}{
\begin{tikzpicture}
\begin{axis}[xlabel={rank}, ylabel= {proportion of types},
    width = 6cm,
    height = 4.3cm,
    xmax = 5940,
    ymin = 0,
    ymax = 1,
    cycle list name=yaf,
    no markers,
	scaled x ticks = false,
	x tick label style = {/pgf/number format/set thousands separator = {\,}},
    xticklabel = {$\scriptstyle\pgfmathprintnumber{\tick}$},
    yticklabel = {$\scriptstyle\pgfmathprintnumber{\tick}$},
	grid=none,
    ]

\begin{scope}
\path[clip] (axis cs: 0, 0) rectangle (axis cs: 5940, 1.005);
\addplot[fill, yafcolor5!10] coordinates { (0, 1) (5940, 1) } \closedcycle;
\addplot+[fill, yafcolor2!10] table[x index = 0, y index = 2, header = false] {mobytoptype.dat} \closedcycle;
\addplot+[fill, yafcolor1!10] table[x index = 0, y index = 1, header = false] {mobytoptype.dat} \closedcycle;
\addplot[fill, yafcolor5] coordinates { (0, 1) (5940, 1) };
\addplot+[yafcolor2] table[x index = 0, y index = 2, header = false] {mobytoptype.dat};
\addplot+[yafcolor1] table[x index = 0, y index = 1, header = false] {mobytoptype.dat};
\end{scope}

\pgfplotsextra{\yafdrawaxis{1}{5940}{0}{1}}
\end{axis}
\end{tikzpicture}}
\subcaptionbox{\emph{Address}}{
\begin{tikzpicture}
\begin{axis}[xlabel={rank}, ylabel= {proportion of types},
    width = 6cm,
    height = 4.3cm,
    xmax = 41382,
    ymin = 0,
    ymax = 1,
    cycle list name=yaf,
    no markers,
	scaled x ticks = false,
	x tick label style = {/pgf/number format/set thousands separator = {\,}},
    xticklabel = {$\scriptstyle\pgfmathprintnumber{\tick}$},
    yticklabel = {$\scriptstyle\pgfmathprintnumber{\tick}$},
	grid=none,
    ]

\begin{scope}
\path[clip] (axis cs: 0, 0) rectangle (axis cs: 41382, 1.005);
\addplot[fill, yafcolor5!10] coordinates { (0, 1) (41382, 1) } \closedcycle;
\addplot+[fill, yafcolor2!10] table[x index = 0, y index = 2, header = false] {addresstoptype.dat} \closedcycle;
\addplot+[fill, yafcolor1!10] table[x index = 0, y index = 1, header = false] {addresstoptype.dat} \closedcycle;
\addplot[fill, yafcolor5] coordinates { (0, 1) (41382, 1) };
\addplot+[yafcolor2] table[x index = 0, y index = 2, header = false] {addresstoptype.dat};
\addplot+[yafcolor1] table[x index = 0, y index = 1, header = false] {addresstoptype.dat};
\end{scope}

\pgfplotsextra{\yafdrawaxis{1}{41382}{0}{1}}
\end{axis}
\end{tikzpicture}}
\subcaptionbox{\emph{Jmlr}}{
\begin{tikzpicture}
\begin{axis}[xlabel={rank}, ylabel= {proportion of types},
    width = 6cm,
    height = 4.3cm,
    xmax = 14355,
    ymin = 0,
    ymax = 1,
    cycle list name=yaf,
    no markers,
	scaled x ticks = false,
	x tick label style = {/pgf/number format/set thousands separator = {\,}},
    xticklabel = {$\scriptstyle\pgfmathprintnumber{\tick}$},
    yticklabel = {$\scriptstyle\pgfmathprintnumber{\tick}$},
	grid=none,
    ]

\begin{scope}
\path[clip] (axis cs: 0, 0) rectangle (axis cs: 14355, 1.005);
\addplot[fill, yafcolor5!10] coordinates { (0, 1) (14355, 1) } \closedcycle;
\addplot+[fill, yafcolor2!10] table[x index = 0, y index = 2, header = false] {jmlrtoptype.dat} \closedcycle;
\addplot+[fill, yafcolor1!10] table[x index = 0, y index = 1, header = false] {jmlrtoptype.dat} \closedcycle;
\addplot[fill, yafcolor5] coordinates { (0, 1) (14355, 1) };
\addplot+[yafcolor2] table[x index = 0, y index = 2, header = false] {jmlrtoptype.dat};
\addplot+[yafcolor1] table[x index = 0, y index = 1, header = false] {jmlrtoptype.dat};
\node[anchor = west, font = \scriptsize, yafcolor5!70!black] at (axis cs:500, 0.91) {general};
\node[anchor = west, font = \scriptsize, yafcolor2!70!black] at (axis cs:500, 0.45) {parallel};
\node[anchor = west, font = \scriptsize, yafcolor1!70!black] at (axis cs:500, 0.09) {serial};
\end{scope}

\pgfplotsextra{\yafdrawaxis{1}{14355}{0}{1}}
\end{axis}
\end{tikzpicture}}
\subcaptionbox{\emph{Nsf}}{
\begin{tikzpicture}
\begin{axis}[xlabel={rank}, ylabel= {proportion of types},
    width = 6cm,
    height = 4.3cm,
    xmax = 2800,
    ymin = 0,
    ymax = 1,
    cycle list name=yaf,
    no markers,
	scaled x ticks = false,
	x tick label style = {/pgf/number format/set thousands separator = {\,}},
    xticklabel = {$\scriptstyle\pgfmathprintnumber{\tick}$},
    yticklabel = {$\scriptstyle\pgfmathprintnumber{\tick}$},
	grid=none,
    ]

\begin{scope}
\path[clip] (axis cs: 0, 0) rectangle (axis cs: 2800, 1.005);

\addplot[fill, yafcolor5!10] coordinates { (0, 1) (2800, 1) } \closedcycle;
\addplot+[fill, yafcolor2!10] table[x index = 0, y index = 2, header = false] {nsftoptype.dat} \closedcycle;
\addplot+[fill, yafcolor1!10] table[x index = 0, y index = 1, header = false] {nsftoptype.dat} \closedcycle;
\addplot[fill, yafcolor5] coordinates { (0, 1) (2800, 1) };
\addplot+[yafcolor2] table[x index = 0, y index = 2, header = false] {nsftoptype.dat};
\addplot+[yafcolor1] table[x index = 0, y index = 1, header = false] {nsftoptype.dat};
\end{scope}

\pgfplotsextra{\yafdrawaxis{1}{2800}{0}{1}}
\end{axis}
\end{tikzpicture}}

\end{center}
\caption{Proportions of different types of episodes as a function of rank. The
top area corresponds go the general episodes, the middle area represents
parallel episodes and the bottom area represents serial episodes.
Proportions were computed by dividing the ranked patterns into 100 bins}
\label{fig:type}
\end{figure}

Finally, let us conclude by demonstrating some of the discovered top patterns
from \emph{Address} and \emph{Jmlr} datasets, given in Figure~\ref{fig:found}.
The first three patterns represent phrases that are often said by the
presidents. Episode in Figure~\ref{fig:found:b} is particularly interesting
since presidents tend to acknowledge vice president(s) and the chief justice at
the beginning of their speeches but the order is not fixed. The remaining 3 patterns
represent common phrases occurring in abstracts of machine learning articles.

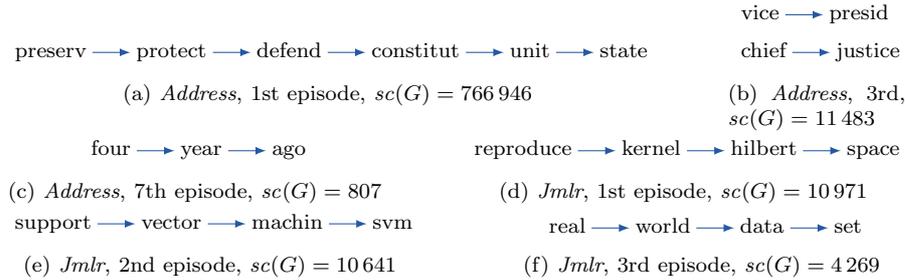
\begin{figure}[htb!]
\subcaptionbox{\emph{Address}, 1st episode, $\score{G} = 766\,946$}{
\begin{tikzpicture}[node distance = 5mm]
\coordinate (n0);
\nodepath{preserv, protect, defend, constitut, unit, state}{n}
\edgepath{n1/n2,n2/n3,n3/n4,n4/n5,n5/n6}
\end{tikzpicture}}\hfill
\subcaptionbox{\emph{Address}, 3rd, $\score{G} = 11\,483$\label{fig:found:b}}{
\begin{tikzpicture}[node distance = 5mm]
\coordinate (n0);
\nodepath{vice, presid}{n}
\edgepath{n1/n2}
\coordinate[below=of n0] (a0);
\nodepath{chief, justice}{a}
\edgepath{a1/a2}
\end{tikzpicture}}

\subcaptionbox{\emph{Address}, 7th episode, $\score{G} = 807$}{
\hspace*{1cm}\begin{tikzpicture}[node distance = 5mm]
\coordinate (n0);
\nodepath{four, year, ago}{n}
\edgepath{n1/n2, n2/n3}
\end{tikzpicture}\hspace*{1cm}}\hfill
\subcaptionbox{\emph{Jmlr}, 1st episode, $\score{G} = 10\,971$}{
\begin{tikzpicture}[node distance = 5mm]
\coordinate (n0);
\nodepath{reproduce, kernel, hilbert, space}{n}
\edgepath{n1/n2, n2/n3, n3/n4}
\end{tikzpicture}}

\subcaptionbox{\emph{Jmlr}, 2nd episode, $\score{G} = 10\,641$}{
\begin{tikzpicture}[node distance = 5mm]
\coordinate (n0);
\nodepath{support, vector, machin, svm}{n}
\edgepath{n1/n2, n2/n3, n3/n4}
\end{tikzpicture}}\hfill
\subcaptionbox{\emph{Jmlr}, 3rd episode, $\score{G} = 4\,269$}{
\hspace*{0.5cm}\begin{tikzpicture}[node distance = 5mm]
\coordinate (n0);
\nodepath{real, world, data, set}{n}
\edgepath{n1/n2, n2/n3, n3/n4}
\end{tikzpicture}\hspace*{0.5cm}}
\caption{Examples of highly ranked episodes from \emph{Address} and \emph{Jmlr} datasets}
\label{fig:found}
\end{figure}

\subsection{Asymptotic normality}
\label{sec:normality}

Proposition~\ref{prop:rationormal} implies that if the independence assumption hold
in the testing sequence, then $\score{G}$ should behave like a sample from a
standard normal distribution as the size of the sequence increases.  In this
section we test the rate of convergence.

To that end we generated several sequences with independent events, each event
having equal probability to occur. We generated three training sequences from
alphabets of $100$, $500$, and $1\,000$ symbols. Each sequence contained
$10\,000$ events.  For each training sequence we generated $3$ testing
sequences of different lengths, namely $10^4$, $10^5$, and $10^6$.

From each testing sequence we mined frequent episodes. We selected the
thresholds such that we got roughly $10\,000$ episodes, more specifically, we
used $12$, $3$, $2$ as thresholds for sequences with $100$, $500$, $1\,000$
symbols respectively. We then tested the discovered non-singleton episodes on
testing sequences. Note that computing the score requires probabilities of
individual events. We computed the scores both by using the true probabilities
and by estimating the probabilities from the training sequence.

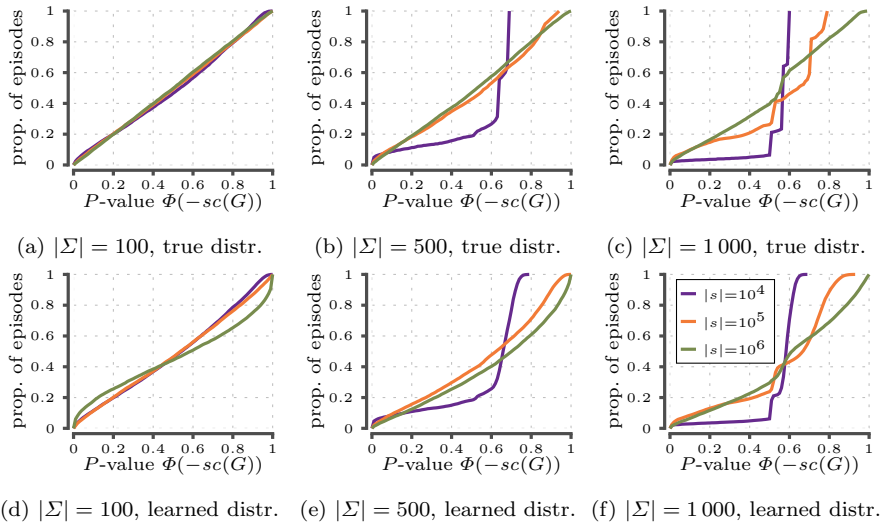
\begin{figure}

\begin{center}
\subcaptionbox{\label{fig:asymptotic:a}$\abs{\Sigma} = 100$, true distr.}{
\begin{tikzpicture}
\begin{axis}[xlabel={$P$-value $\Phi(-\score{G})$}, ylabel= {prop. of episodes},
    width = 4.2cm,
    xmax = 1,
    ymin = 0,
    ymax = 1,
    cycle list name=yaf,
    no markers,
	legend pos = north west,
    xticklabel = {$\scriptstyle\pgfmathprintnumber{\tick}$},
    yticklabel = {$\scriptstyle\pgfmathprintnumber{\tick}$}
    ]
\addplot table[x index = 1, y index = 0, header = false] {res_100_e4.dat};
\addplot table[x index = 1, y index = 0, header = false] {res_100_e5.dat};
\addplot table[x index = 1, y index = 0, header = false] {res_100_e6.dat};
\pgfplotsextra{\yafdrawaxis{0}{1}{0}{1}}
\end{axis}
\end{tikzpicture}}
\subcaptionbox{$\abs{\Sigma} = 500$, true distr.}{
\begin{tikzpicture}
\begin{axis}[xlabel={$P$-value $\Phi(-\score{G})$}, ylabel= {prop. of episodes},
    width = 4.2cm,
    xmax = 1,
    ymin = 0,
    ymax = 1,
    cycle list name=yaf,
    no markers,
    xticklabel = {$\scriptstyle\pgfmathprintnumber{\tick}$},
    yticklabel = {$\scriptstyle\pgfmathprintnumber{\tick}$}
    ]
\addplot table[x index = 1, y index = 0, header = false] {res_500_e4.dat};
\addplot table[x index = 1, y index = 0, header = false] {res_500_e5.dat};
\addplot table[x index = 1, y index = 0, header = false] {res_500_e6.dat};
\pgfplotsextra{\yafdrawaxis{0}{1}{0}{1}}
\end{axis}
\end{tikzpicture}}
\subcaptionbox{$\abs{\Sigma} = 1\,000$, true distr.}{
\begin{tikzpicture}
\begin{axis}[xlabel={$P$-value $\Phi(-\score{G})$}, ylabel= {prop. of episodes},
    width = 4.2cm,
    xmax = 1,
    ymin = 0,
    ymax = 1,
    cycle list name=yaf,
    no markers,
    xticklabel = {$\scriptstyle\pgfmathprintnumber{\tick}$},
    yticklabel = {$\scriptstyle\pgfmathprintnumber{\tick}$}
    ]
\addplot table[x index = 1, y index = 0, header = false] {res_1000_e4.dat};
\addplot table[x index = 1, y index = 0, header = false] {res_1000_e5.dat};
\addplot table[x index = 1, y index = 0, header = false] {res_1000_e6.dat};
\pgfplotsextra{\yafdrawaxis{0}{1}{0}{1}}
\end{axis}
\end{tikzpicture}}

\subcaptionbox{\label{fig:asymptotic:d}$\abs{\Sigma} = 100$, learned distr.}{
\begin{tikzpicture}
\begin{axis}[xlabel={$P$-value $\Phi(-\score{G})$}, ylabel= {prop. of episodes},
    width = 4.2cm,
    xmax = 1,
    ymin = 0,
    ymax = 1,
    cycle list name=yaf,
    no markers,
    xticklabel = {$\scriptstyle\pgfmathprintnumber{\tick}$},
    yticklabel = {$\scriptstyle\pgfmathprintnumber{\tick}$}
    ]
\addplot table[x index = 1, y index = 0, header = false] {res_100_e4_learned.dat};
\addplot table[x index = 1, y index = 0, header = false] {res_100_e5_learned.dat};
\addplot table[x index = 1, y index = 0, header = false] {res_100_e6_learned.dat};
\pgfplotsextra{\yafdrawaxis{0}{1}{0}{1}}
\end{axis}
\end{tikzpicture}}
\subcaptionbox{$\abs{\Sigma} = 500$, learned distr.}{
\begin{tikzpicture}
\begin{axis}[xlabel={$P$-value $\Phi(-\score{G})$}, ylabel= {prop. of episodes},
    width = 4.2cm,
    xmax = 1,
    ymin = 0,
    ymax = 1,
    cycle list name=yaf,
    no markers,
    xticklabel = {$\scriptstyle\pgfmathprintnumber{\tick}$},
    yticklabel = {$\scriptstyle\pgfmathprintnumber{\tick}$},
    ]
\addplot table[x index = 1, y index = 0, header = false] {res_500_e4_learned.dat};
\addplot table[x index = 1, y index = 0, header = false] {res_500_e5_learned.dat};
\addplot table[x index = 1, y index = 0, header = false] {res_500_e6_learned.dat};
\pgfplotsextra{\yafdrawaxis{0}{1}{0}{1}}
\end{axis}
\end{tikzpicture}}
\subcaptionbox{$\abs{\Sigma} = 1\,000$, learned distr.}{
\begin{tikzpicture}
\begin{axis}[xlabel={$P$-value $\Phi(-\score{G})$}, ylabel= {prop. of episodes},
    width = 4.2cm,
    xmax = 1,
    ymin = 0,
    ymax = 1,
    cycle list name=yaf,
    no markers,
	legend pos = north west,
    xticklabel = {$\scriptstyle\pgfmathprintnumber{\tick}$},
    yticklabel = {$\scriptstyle\pgfmathprintnumber{\tick}$}
    ]
\addplot table[x index = 1, y index = 0, header = false] {res_1000_e4_learned.dat};
\addplot table[x index = 1, y index = 0, header = false] {res_1000_e5_learned.dat};
\addplot table[x index = 1, y index = 0, header = false] {res_1000_e6_learned.dat};
\legend{$\scriptstyle{\abs{s} = 10^4}$, $\scriptstyle{\abs{s} = 10^5}$, $\scriptstyle{\abs{s} = 10^6}$};
\pgfplotsextra{\yafdrawaxis{0}{1}{0}{1}}
\end{axis}
\end{tikzpicture}}
\end{center}

\caption{Cumulative proportion of episodes as a function of a score
$\Phi\pr{-\score{G}}$ in generated sequences with independent events.  Ideally,
the proportion is an identity function. The left column represents sequences
with $100$ symbols, the centre column represents sequences with $500$ symbols,
and the right column represents sequences with $1\,000$ symbols.  The top row
uses true occurrences for individual symbols when computing the moments, while
the bottom row estimates the occurrences from training sequence. Each plot
contains three lines representing different sizes of testing sequences}
\label{fig:asymptotic}
\end{figure}

In Figure~\ref{fig:asymptotic} we plotted the proportion of episodes for which
$\Phi\pr{-\score{G}}$ is smaller than the threshold.
Proposition~\ref{prop:rationormal} implies that ideally this plot should be the
identity line between $0$ and $1$. We see that this is the case in
Figure~\ref{fig:asymptotic:a}. As we increase the size of the alphabet, the
estimate becomes more and more inaccurate. We believe that this is due to high
skewness of the actual distribution. When using true probabilities for
individual probabilities, longer testing sequences produce better results.
Using estimated values introduces additional errors, as can be seen in
Figure~\ref{fig:asymptotic:d} where a testing sequence of length $10^6$ is
less ideal than the sequence of $10^5$. However, this phenomenon can be
attacked by dividing the sequence to training and testing portion more fairly,
thus making the estimates more accurate. 

\section{Discussion and Conclusions}
\label{sec:conclusions}

In this paper we proposed a new quality measure for episodes based on
minimal windows.  In order to do this, we approached by computing the expected
values based on the independence model and compared the expectations to  
the observed values by computing a $Z$-score.

Our main technical contribution is a technique for computing the moments of
minimal windows. In order to do so we created a series of elaborate finite
state machines and demonstrated that we can compute the moments recursively. In
this paper we chose to use a specific statistic, namely $\rho^d$, where $d$ is
a length of a minimal window and $\rho$ is a user-given parameter.  However,
the same principle can be applied also directly on the length of minimal 
windows.

While the actual computation of statistics is fairly complex and requires a
great number of recursive updates, and even may be exponentially slow, our
experiments demonstrate that the computation is fast in practice, we can rank
tens of thousands of episodes in the matter of seconds.

Our technique has its limitations. In synthetic data, \emph{plant}, after
finding 5 true patterns, our method continued scoring high patterns that were
either superpatterns of subpatterns of the first 5 patterns. All these patterns
are significant in the sense that they deviate significantly from the
independence model. Nevertheless, they provide no new information about the
underlying structure in the data. This problem occurs in any pattern ranking
scheme where the ranking method does not take other patterns into account.

Approaches to further reduce patterns by considering patterns as a set instead
of individual patterns have been developed for itemsets. For example, one
approach for itemsets involve in partitioning itemsets into subitemsets and
applying independence assumption between the individual
parts~\cite{webb:10:self-sufficient}. Transforming this idea to episodes is not
trivial. A more direct approach---although using only serial
episodes---where episodes were selected using MDL techniques
was suggested in~\cite{tatti:12:long}. An extension of this work to
general episodes would be interesting.

Proposition~\ref{prop:rationormal} implies that we can interpret our measure as a
$P$-value. In practice, this can be problematic as we demonstrate in
Section~\ref{sec:normality}. Since the distributions are heavily skewed,
especially when dealing with a large alphabet, we require a lot of samples
before the normality assumption becomes accurate. Nevertheless our experiments
with synthetic and text data demonstrate that our score produces interpretable
rankings.

\section*{Acknowledgements}
Nikolaj Tatti was partly supported by a Post-Doctoral Fellowship of the Research Foundation -- Flanders (\textsc{fwo}).

\bibliographystyle{spbasic}
\bibliography{bibliography}

\appendix
\section{Proofs}
\label{sec:appendix}

\begin{proof}[Proof of Proposition~\ref{prop:simplecover}]
We will prove this by induction. Let $i$ be the source state of $M$.  The
proposition holds trivially when $X = \set{i}$, a source state. Assume now that
the proposition holds for all parent states of $X$.

Assume that $s$ covers $X$.  Let $t$ be a subsequence of $s$ that leads
$\simple{M}$ from the source state $\set{i}$ to $X$.  Let $s_e$ be the last
symbol of $s$ occurring in $t$.  Then a parent state $Y = \enset{y_1}{y_L} = \parent{X; s_e}$
is covered by $s[1, e - 1]$.  By the induction assumption at least one $y_k$ is
covered by $s[1, e - 1]$. If there is $x_j \in X$ such that $x_j = y_k$, then $x_j$ is covered by
$s$, otherwise there is $x_j$ that has $y_k$ as a parent state.  The edge
connecting $x_j$ and $y_k$ is labelled with $s_e$. Hence $s$ covers $x_j$ also.

To prove the other direction assume that $s$ covers $x_j$. Let $t$ be a
sub-sequence that leads $M$ from $i$ to $x_j$. Let $s_e$ be the last symbol
occurring in $t$. Let $y$ be the parent state of $x_j$ connected by an edge
labelled with $s_e$. Since $s_e \in \inc{X}$, we must have $Y$ as a parent
state of $X$ such that $y \in Y$. By the induction assumption, $s[1, e - 1]$
covers $Y$.  Hence $s$ covers $X$.
\end{proof}

In order to prove Proposition~\ref{prop:greedy} we need the following lemma.

\begin{lemma}
\label{lem:greedy}
Let $G$ be an episode and assume a sequence $s = \enpr{s_1}{s_L}$ that covers $G$.
Let $\efam{H} = \set{G - v; v \in \sinks{G}, \lab{v} = s_L}$. If $\efam{H}$ is
empty, then $s[1, L - 1]$ covers $G$. Otherwise,
there is an episode $H \in \efam{H}$ that is covered by $s[1, L - 1]$.
\end{lemma}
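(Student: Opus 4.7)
The plan is to fix a covering map $f$ witnessing that $s$ covers $G$ and then analyse which node of $G$, if any, uses the last index $L$. Observe first the following structural fact: if $f(v) = L$ for some node $v$, then $v$ must be a sink. Indeed, if $v$ had an outgoing edge $(v, v')$, the covering condition would force $f(v') > f(v) = L$, which is impossible since $f$ maps into $\set{1, \ldots, L}$. Moreover, in that situation $\lab{v} = s_{f(v)} = s_L$, so $v$ is a sink of $G$ with $\lab{v} = s_L$, and therefore $G - v \in \efam{H}$.

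With this observation in hand, I would split into the two cases stated in the lemma. Suppose first that $\efam{H}$ is empty. By the structural fact, if some $v$ were mapped to $L$ then $v$ would be a sink with $\lab{v} = s_L$, yielding $G - v \in \efam{H}$, contradicting emptiness. Hence no node is mapped to $L$, so the image of $f$ is contained in $\set{1, \ldots, L - 1}$; the same map $f$ witnesses that $s[1, L - 1]$ covers $G$.

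Next suppose $\efam{H}$ is non-empty. If some node $v$ of $G$ satisfies $f(v) = L$, then by the structural fact $v$ is a sink with $\lab{v} = s_L$, so $H = G - v \in \efam{H}$, and restricting $f$ to $V(G) \setminus \set{v}$ gives an injective, label-preserving, order-preserving map into $\set{1, \ldots, L - 1}$, showing that $s[1, L - 1]$ covers $H$. Otherwise no node of $G$ is mapped to $L$, so $f$ in fact already covers $G$ in $s[1, L - 1]$; we then pick any sink $v \in \sinks{G}$ with $\lab{v} = s_L$ (which exists because $\efam{H}$ is non-empty), and restricting $f$ to $V(G) \setminus \set{v}$ witnesses that $s[1, L - 1]$ covers $H = G - v \in \efam{H}$.

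The only subtle point is the structural observation that any node landing at index $L$ must be a sink, and the minor bookkeeping that restriction of a covering map to a sub-episode remains a covering map. Everything else is a direct case split, so no hard obstacle is anticipated.
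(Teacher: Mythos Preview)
Your proposal is correct and follows essentially the same approach as the paper: fix a covering map $f$, analyse whether the index $L$ lies in its range, and restrict $f$ accordingly. You are slightly more explicit than the paper in justifying why a node mapped to $L$ must be a sink, but otherwise the argument is identical.
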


\begin{proof}
Let $f$ be a valid mapping of $V(G)$ to indices of $s$ corresponding to the
coverage. If $\efam{H}$ is empty, then $L$ is not in the range of $f$, then $s[1, L -
1]$ covers $G$. If $\efam{H}$ is not empty but $L$ is not in the range of $f$, then $s[1,
L - 1]$ covers $G$, and any episode in $\efam{H}$. 

Assume now that $L$ is in range of $f$, that is, there is a sink $v$ with
a label $s_L$. Episode $G - v$ is in $\efam{H}$. Moreover, $f$ restricted to $G - v$
provides the needed mapping in order to $s[1, L - 1]$ to cover $G - v$.
\end{proof}

\begin{proof}[Proof of Proposition~\ref{prop:greedy}]
If $\greedy{X, s} = \set{i}$, then it is trivial to see that $s$ covers $X$.

Assume that $s$ covers $X$.  We will prove this direction by induction over
$L$, the length of $s$.  The proposition holds for $L = 0$. Assume that $L > 0$
and that proposition holds for all sequences of length $L - 1$. 

Let $Y = \greedy{X, s_L}$. Note that $\greedy{X, s} = \greedy{Y, s[1, L - 1]}$.
Hence, to prove the proposition we need to show that $s[1, L - 1]$ covers $Y$.

If $Y = \set{i}$, then $s[1, L - 1]$ covers $Y$.  Hence, we can assume that $Y
\neq \set{i}$, that is, $Y = \sub{X; s_L} \cup \stay{X; s_L}$.

Proposition~\ref{prop:simplecover} implies that one of the states of $M_G$, say
$x \in X$, is covered by $s$.  Proposition~\ref{prop:cover} states that the
corresponding episode, say $H$, is covered by $s$.

Assume that $x \in Y$. This is possibly only if $x \in \stay{X; s_L}$ that is
there is no sink node in $H$ labelled as $s_L$. Lemma~\ref{lem:greedy} implies
that $s[1, L - 1]$ covers $H$, Propositions~\ref{prop:cover}~and~\ref{prop:simplecover}
imply that $s[1, L - 1]$ covers $Y$.

Assume that $x \notin Y$, Then $\sub{X; s_L} \subseteq Y$ contains all states
of $M_G$ corresponding to the episodes of form $H - v$, where $v$ is sink node
of $H$ with a label $s_L$.  According to Lemma~\ref{lem:greedy}, $s[1, L - 1]$
covers one of these episodes,
Propositions~\ref{prop:cover}~and~\ref{prop:simplecover} imply that $s[1, L -
1]$ covers $Y$.
\end{proof}

\begin{proof}[Proof of Proposition~\ref{prop:join}]
We will prove the proposition by induction over $L$, the length of $s$.  The
proposition holds when $L = 0$. Assume that $L > 0$ and that proposition holds
for sequence of length $L - 1$.

Let $\beta = (y_1, y_2) = \greedy{\alpha, s_L}$. Then, by definition of $M^*$,
$y_i = \greedy{x_i, s_L}$. Write $t = s[1, L - 1]$.
Since 
\[
	\greedy{\beta, t} = \greedy{\alpha, s}, \quad
	\greedy{y_1, t} = \greedy{x_1, s}, \quad
	\greedy{y_2, t} = \greedy{x_2, s}.
\]
and, because of induction assumption, $\greedy{\beta, t} = (\greedy{y_1, t}, \greedy{y_2, t})$,
we have $\greedy{\alpha, s} = (\greedy{x_1, s}, \greedy{x_2, s})$.
\end{proof}

\begin{proof}[Proof of Proposition~\ref{prop:minmach}]
Assume that $s$ is a minimal window for $G$. Since $s$ covers $S$ in $M$,
$\greedy{S, s; M} = I$. This implies that $\greedy{S, s; M_1} = I$ or
$\greedy{S, s; M_1} = J$. The latter case implies that $s[2, L]$ covers $S$ in
$M$, which is a contradiction. Hence, $\greedy{S, s; M_1} = I$.  Let $Z =
\greedy{T, s; M_2}$.  If $Z = I$, then $s[1, L - 1]$ covers $S$ in $M$, which
is a contradiction.  Hence $Z \neq I$. Proposition~\ref{prop:join} implies that
$\greedy{\alpha, s} = (I, Z)$.

Assume that $\greedy{\alpha, s} = (I, Y)$ such that $Y \neq I$.
Proposition~\ref{prop:join} implies that $\greedy{S, s; M_1} = I$ and
$\greedy{T, s; M_2} \neq I$. The former implication leads to $\greedy{S, s; M}
= I$ which implies that $s$ covers $G$.

If $s[2, L]$ covers $G$, then $\greedy{S, s[2, L]; M} = I$ and so $\greedy{S,
s; M_1} = J$, which is a contradiction. Hence $s[2, L]$ does not cover $G$.
The latter implication leads to $\greedy{S, s[1, L - 1]; M} \neq I$ which
implies that $s[1, L - 1]$ does not cover $G$. This proves the proposition.
\end{proof}

\begin{proof}[Proof of Proposition~\ref{prop:greedyprob}]
If $L = 0$, then $\greedy{x, s} = x$ which immediately implies the proposition.
Assume that $L > 0$. Note that $\greedy{x, s} = \greedy{\greedy{x, s_L}, s[1, L - 1]}$.
\[
\begin{split}
	&p(\greedy{x, s} \in Y \mid \abs{s} = L) \\
	&\quad = \sum_{a \in \Sigma} p(a) p(\greedy{x, s} \in Y \mid \abs{s} = L, s_L = a)\\
	&\quad = \sum_{a \in \Sigma} p(a) p(\greedy{\greedy{x, a}, s[1, L - 1]} \in Y \mid \abs{s} = L, s_L = a).\\
\end{split}
\]
Since individual symbols in $s$ are independent, it follows that
\[
	p(\greedy{\greedy{x, a}, s[1, L - 1]} \in Y \mid \abs{s} = L, s_L = a) = \pgreedy{\greedy{x, a},  Y,  L - 1}.
\]
This proves the proposition.
\end{proof}

\begin{proof}[Proof of Lemma~\ref{lem:finite}]
Define $q = \sqrt{1 - \min_{a \in \Sigma} p(a)}$. Note that $q < 1$.
We claim that for each $x$ there is a constant $C_x$
such that $\pgreedy{x, Y, L} \leq C_xq^L = O(q^{L})$ which
in turns proves the lemma.  To prove
the claim we use induction over parenthood of $x$ and $L$.

Since the source node is not in $Y$, the first step follows immediately.
Assume that the result holds for all parent states of $x$.
Define 
\[
C_x = \max\big(1, \frac{1}{q(1 - q)}\sum_{a \in \inc{x} \atop y = \greedy{x, a}} p(a) C_y\big) \text{ which implies }
	q C_x + q^{-1}\sum_{a \in \inc{x} \atop y = \greedy{x, a}} p(a)C_y \leq C_x.
\]
Since $C_x \geq 1$, the case of $L = 0$ holds.
Assume that the the induction assumption holds for $C_y$ and for $C_x$ up to $L - 1$.
Let $r = 1 - \sum_{a \in \inc{x}} p(a)$. Note that $r \leq q^2$.
According to Proposition~\ref{prop:greedyprob} we have
\[
\begin{split}
	\pgreedy{x, Y, L}  & = r \pgreedy{x, Y, L - 1} +  \sum_{a \in \inc{x} \atop y = \greedy{x, a}} p(a) \pgreedy{y,  Y,  L - 1} \\
	& \leq r C_xq^{L - 1} + \sum_{a \in \inc{x} \atop y = \greedy{x, a}} p(a)C_y q^{L - 1}\\
	& \leq q^L\big(q C_x + q^{-1}\sum_{a \in \inc{x} \atop y = \greedy{x, a}} p(a)C_y\big) \leq q^LC_x.
\end{split}
\]
This proves that $\pgreedy{x, Y, L}$ decays at exponential rate.
\end{proof}

\begin{proof}[Proof of Proposition~\ref{prop:greedymom}]
The proposition follows by a straightforward manipulation of Equation~\ref{eq:greedyprob}.
First note that
\begin{equation}
\label{eq:greedymom1}
	\sum_{L = 1}^\infty f(L - 1) \pgreedy{x, Y, L} = c\moment{x, f, Y} + \moment{x, h, Y}.
\end{equation}
Equation~\ref{eq:greedyprob} implies that
\begin{equation}
\label{eq:greedymom2}
\begin{split}
	\sum_{L = 1}^\infty f(L - 1) \pgreedy{x, Y, L}
	                 & = \sum_{a \in \Sigma \atop y = \greedy{x, a}} p(a) \sum_{L = 1}^\infty f(L - 1) \pgreedy{y,  Y,  L - 1} \\
	                 & = \sum_{a \in \Sigma \atop y = \greedy{x, a}} p(a) (i(y) + \sum_{L = 1}^\infty f(L) \pgreedy{y,  Y,  L}) \\
	                 & = \sum_{a \in \Sigma \atop y = \greedy{x, a}} p(a) (i(y) + \moment{y, f, Y}) \\
	                 & = q(i(x) + \moment{x, f, Y})  + \sum_{\mathclap{a \in \inc{x} \atop y = \greedy{x, a}}} p(a) (i(y) + \moment{y, f, Y}). \\
\end{split}
\end{equation}
Combining Equations~\ref{eq:greedymom1}~and~\ref{eq:greedymom2} and solving $\moment{x, f, Y}$ gives us the result.
\end{proof}

To prove the asymptotic normality we will use the following theorem.

\begin{theorem}[Theorem 27.4 in~\citep{billingsley:95:probability}]
\label{thr:mixing}
Assume that $U_k$ is a stationary sequence with $\mean{U_k} = 0$,
$\mean{U_k^{12}} < \infty$, and is $\alpha$-mixing with $\alpha(n) = O(n^{-5})$,
where $\alpha(n)$ is the strong mixing coefficient,
\[
	\alpha(n) = \sup_{k, A, B} \abs{p(A, B) - p(A)p(B)},
\]
where $A$ is an event depending only on $U_{-\infty}, \ldots, U_k$ and $B$ is an event depending
only on $U_{k + n}, \ldots,U_{\infty}$. Let $S_k = U_1 + \cdots + U_k$. 
Then $\sigma^2 = \lim_k 1/k \mean{S_k}$ exists and 
$S_k / \sqrt{k}$ converges to $N(0, \sigma^2)$ and $\sigma^2 = \mean{U_1^2} + 2\sum_{k = 2}^\infty \mean{U_1U_k}$.
\end{theorem}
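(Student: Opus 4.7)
My plan is to adopt the classical Bernstein big-block/small-block decomposition, which is the standard route to a central limit theorem under strong mixing. The overall strategy is to chop the sum $S_k$ into alternating long blocks and short separation blocks, argue that the short blocks contribute negligibly to $S_k/\sqrt{k}$, show that the long blocks are asymptotically independent thanks to the $\alpha$-mixing assumption, and finally apply a Lindeberg-type CLT to this almost-independent sum. The summability and identification of $\sigma^2$ will come from a covariance inequality for mixing sequences.

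In more detail, I would first fix block lengths $p = p(k)$ and $q = q(k)$ with $p \to \infty$, $q/p \to 0$, and $kp^{-1} \to \infty$ (concretely something like $p = k^{1/2 - \epsilon}$, $q = k^{1/2 - 2\epsilon}$), and partition $\{1, \ldots, k\}$ into consecutive pairs of blocks of sizes $p$ and $q$. Writing $S_k = B_1 + \cdots + B_r + b_1 + \cdots + b_r + R$ where the $B_i$ are big-block sums, $b_i$ small-block sums, and $R$ a negligible remainder, I would use stationarity and the bound $E[U_1^2] < \infty$ to show $\var(\sum b_i)/k \to 0$, so the small blocks wash out in $L^2$. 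Next, I would invoke a blocking/coupling argument: iterated application of the mixing coefficient (e.g.\ via the standard trick of approximating the joint characteristic function of $B_1, \ldots, B_r$ by the product of marginals up to an error of order $r \alpha(q)$) together with $\alpha(q) = O(q^{-5})$ shows that the big blocks may be treated as independent in the limit.

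Once the big blocks are treated as independent, the Lindeberg condition is the main technical obstacle: here the $12$th moment hypothesis is essential. By stationarity and a Rosenthal-type moment inequality for mixing sequences (controlling $E|B_i|^{2+\delta}$ via $E[U_1^{12}]$ and $\sum \alpha(n)^{\delta/(12 + \delta)}$, which converges because $\alpha(n) = O(n^{-5})$), one bounds the Lindeberg sum $\sum_i E[B_i^2; |B_i| > \varepsilon \sqrt{k}]$ and shows it tends to zero. This supplies the CLT for the big-block sum and, combined with the previous step, for $S_k/\sqrt{k}$.

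Finally, I would identify the variance. Writing $\var(S_k)/k = E[U_1^2] + 2\sum_{j=1}^{k-1}(1 - j/k) E[U_1 U_{1+j}]$, the main step is to show the series $\sum_j E[U_1 U_{1+j}]$ converges absolutely. This follows from Davydov's covariance inequality, $|E[U_1 U_{1+j}]| \leq C \alpha(j)^{1 - 2/p} (E|U_1|^p)^{2/p}$, applied with $p = 12$; with $\alpha(j) = O(j^{-5})$ the resulting bound is $O(j^{-5(1 - 1/6)}) = O(j^{-25/6})$, which is summable. Dominated convergence then gives $\sigma^2 = \lim_k \var(S_k)/k = E[U_1^2] + 2\sum_{k=2}^\infty E[U_1 U_k]$, matching the stated formula. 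The hardest technical step, in my view, is the moment estimate for $B_i$ that feeds Lindeberg, since it is what forces the stringent $12$th-moment hypothesis and the $n^{-5}$ mixing rate to balance.
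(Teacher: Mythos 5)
This statement is not proved in the paper at all: it is quoted verbatim as Theorem 27.4 of Billingsley and used as a black box in the proof of Proposition~\ref{prop:normal}, so there is no in-paper argument to compare against. Your sketch is essentially the classical proof that Billingsley himself gives: Bernstein's big-block/small-block decomposition, asymptotic independence of the big blocks via a characteristic-function approximation with error of order $r\alpha(q)$, a Lyapunov/Lindeberg condition for the block sums financed by the twelfth-moment assumption, and absolute summability of the covariances via a Davydov-type inequality (with $p=12$ the exponent $1-2/12=5/6$ gives $O(j^{-25/6})$, which is summable and also identifies $\sigma^2$ as the limit of $\operatorname{Var}(S_k)/k$). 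Two small points to tighten: the negligibility of the small blocks does not follow from $\mean{U_1^2}<\infty$ and stationarity alone --- you need the same covariance summability (or a mixing bound on cross-block covariances) to get $\operatorname{Var}\fpr{\sum_i b_i} = O(rq)$ before concluding $rq/k \to 0$; and the variance of a single big block must be shown to be $\sim p\,\sigma^2$, which again uses the summable-covariance estimate, so Davydov's inequality should be established first and then reused throughout rather than only in the final identification step. With those dependencies made explicit, your outline is a faithful reconstruction of the cited proof, and the choice $p\approx k^{1/2-\epsilon}$, $q \approx k^{1/2-2\epsilon}$ indeed makes $r\alpha(q)\to 0$ under $\alpha(n)=O(n^{-5})$.
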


\begin{proof}[Proof of Proposition~\ref{prop:normal}]
Let us write $T_k = (Z_k, X_k) - (q, p)$ and $S_L = 1/\sqrt{L}\sum_{k = 1}^L
T_k$.  Assume that we are given a vector $r = (r_1, r_2)$ and write $U_k =
r^TT_k$. We will first prove that $r^TS_L$ converges to a normal distribution
using Theorem~\ref{thr:mixing}.

First note that $\mean{U_k} = 0$ and that 
\[
	\mean{U_k^{12}} = \sum_{i = 0}^{12} {12 \choose i}r_1^ir_2^{12 - i}\mean{Z_k^iX_k^{12 - i}} = r_2^{12}\mean{X_k} + \sum_{i = 1}^{12} {12 \choose i}r_1^ir_2^{12 - i}\mean{Z_k^i}.
\]
Since every moment of $Z_k$ and $X_k$ is finite, $\mean{U_k^{12}}$ is also
finite.  We will prove now that $U_k$ is $\alpha$-mixing.

Fix $k$ and $N$. Write $W$ to be an event that $s[k + 1, N]$ covers $G$.  If
$W$ is true, then $X_l$ and $Z_l$ (and hence $U_l$) for $l \leq k$ depends only
$s[l, N]$, that is, either there is a minimal window $s[l, N']$, where $N' < N$
or $X_l = Z_l = 0$.

Let  $A$ be an event depending only on $U_{-\infty}, \ldots, U_k$ and $B$ be an
event depending only on $U_{N + 1}, \ldots,U_{\infty}$.  Then $p(A,B \mid W) =
p(A \mid W)p(B \mid W)$. We can rephrase this and bound $\alpha(n) \leq p(s[1,
n - 1] \text{ does not covers } G)$.  To bound the right side, let $M =
\simple{M_G}$, let $v$ be its sink state and let $V$ be all states save the
source state. 
Then the probability is equal to 
\[
	p(s[1, n - 1] \text{ does not covers } G) = \pgreedy{v, V, n - 1}.
\]
Since $V$ does not contain the source node, the moment $\moment{v, n \to n^5, V}$ is finite.
Consequently, $n^5\pgreedy{v, V, n} \to 0$ which implies that $\alpha(n) = O(n^{-5})$.
Thus Theorem~\ref{thr:mixing} implies that $r^TS_L$ converges to a normal distribution with
the variance $\sigma^2 = r_1^2C_{11} + 2r_1r_2C_{12} + r_2^2C_{22} = r^TCr$.
Levy's continuity theorem~\citep[Theorem 2.13][]{vaart:98:asymptotic} now implies
that the characteristic function of $r^TS_L$ converges to a characteristic function of normal distribution $N(0, \sigma^2)$,
\[
	\mean{\exp\fpr{itr^TS_L}}  \to \exp\fpr{-1/2t^2r^TCr}.
\]
The left side is a characteristic function of $S_L$ (with $tr$ as a parameter).
Similarly, the right side is a characteristic function of $N(0, C)$.
Levy's continuity theorem now implies that $S_L$ converges into $N(0, C)$.
\end{proof}

\begin{proof}[Proof of Proposition~\ref{prop:rationormal}]
Function $f(x, y) = x/y$ is differentiable at $(q, p)$.
Since $1/\sqrt{L}\big(\sum_{k = 1}^L (Z_k, X_k) - (q, p)\big)$ converges to normal distribution, we can apply Theorem~3.1 in~\citep{vaart:98:asymptotic}
so that
\[
	    \sqrt{L}\pr{\frac{\sum_{k = 1}^L Z_k}{\sum_{k = 1}^L X_k} - \mu} = \sqrt{L} f\bigg(1/L\sum_{k = 1}^L Z_k, 1/L\sum_{k = 1}^L X_k\bigg) - \sqrt{L}f(q, p)
\]
converges to $N(0, \sigma^2)$, where $\sigma^2 = \nabla f(q, p)^T C \nabla f(q, p)$. The gradient of $f$ is equal to
$\nabla f(q, p) = (1/p, -\mu/p)$. The proposition follows.
\end{proof}

\begin{proof}[Proof of Proposition~\ref{prop:cross}]
To prove all four cases simultaneously, let us write
write $A$ to be either $X_1$ or $Z_1$ and let $B_k$ to be either $X_k$ or $Z_k$.
Let $a = \mean{A}$ and $b = \mean{B_k}$. First note that
$\mean{(A - a)(B_k - b)} = \mean{A(B_k - b)}$, which allows us to ignore $a$
inside the mean.

Assume that we have $0 < n < k$. Then given that $Y_1 = n$, $A$ and $X_1$ depends only on
$n$ first symbols of sequence. Since $B_k$ does not depend on $k - 1$ first symbols, this implies that
\[
\begin{split}
	p(A, B_k \mid Y_1 = n) = p(A \mid Y_1 = n)p(B_k \mid Y_1  = n) = p(A \mid Y_1 = n)p(B_k),
\end{split}
\]
which in turns implies that $\mean{A (B_k - b) \mid Y_1 = n}  = 0$.

Note that for $A = 0$ whenever $Y_1 = 0$. Consequently, we
have 
\[
\begin{split}
	\means{A\sum_{k = 2}^\infty (B_k - b)}  & = \sum_{n = 1}^\infty \means{A\sum_{k = 2}^\infty (B_k - b) \mid Y_1 = n} p(Y_1 = n) \\
	                                          & = \sum_{n = 1}^\infty \means{A\sum_{k = 2}^n (B_k - b) \mid Y_1 = n} p(Y_1 = n) \\
	                                          & = \means{A\sum_{k = 2}^{Y_1} (B_k - b)} = \means{A \sum_{k = 2}^{Y_1} B_k} - \means{A \sum_{k = 2}^{Y_1}b} \\
	                                          & = \means{A \sum_{k = 2}^{Y_1} B_k} - \mean{A(Y_1 - X_1)}b \\
	                                          & = \means{X_1A \sum_{k = 2}^{Y_1} X_kB_k} - \mean{A(Y_1 - X_1)}b,
\end{split}
\]
where the second last equality holds because $\sum_{k = 2}^{Y_1} 1 = Y_1 - X_1$
and the last equality follows since $X_k = X_k^2$ and $Z_k = X_kZ_k$ for any $k$.
\end{proof}

\end{document}